\documentclass{article}
 \PassOptionsToPackage{numbers}{natbib}
\usepackage[final]{nips_2017}

\usepackage[utf8]{inputenc} 
\usepackage[T1]{fontenc}    
\usepackage{hyperref}       
\usepackage{url}            
\usepackage{booktabs}       
\usepackage{amsfonts}       
\usepackage{nicefrac}       
\usepackage{microtype}      

\usepackage{amsmath}
\usepackage{amssymb}
\usepackage{xspace}
\usepackage{bm}
\usepackage{algorithm}



\newcommand{\F}{\mathcal{F}}

\newcommand{\real}{\mathbb{R}}

\newcommand{\II}[1]{\mathbb{I}_{\left\{#1\right\}}}
\newcommand{\PP}[1]{\mathbb{P}\left[#1\right]}
\newcommand{\EE}[1]{\mathbb{E}\left[#1\right]}

\newcommand{\PPc}[2]{\mathbb{P}\left[#1\left|#2\right.\right]}

\newcommand{\PPcc}[2]{\mathbb{P}\left[\left.#1\right|#2\right]}

\newcommand{\EEcc}[2]{\mathbb{E}\left[\left.#1\right|#2\right]}

\def\argmax{\mathop{\mbox{ arg\,max}}}

\newcommand{\ev}[1]{\left\{#1\right\}}
\newcommand{\pa}[1]{\left(#1\right)}

\newcommand{\wh}{\widehat}
\newcommand{\wt}{\widetilde}

\newcommand{\hmu}{\wh{\mu}}

\newcommand{\tmu}{\wt{\mu}}

\newcommand{\Et}{E^{\tmu}}
\newcommand{\Eh}{E^{\hmu}}
\newcommand{\Etti}{\Et_{t,i}}
\newcommand{\Ehti}{\Eh_{t,i}}

\usepackage{amsthm}

\newtheorem{theorem}{Theorem}
\newtheorem{proposition}{Proposition}
\newtheorem{lemma}{Lemma}
\newtheorem{corollary}{Corollary}

\usepackage{todonotes}
\definecolor{PalePurp}{rgb}{0.66,0.57,0.66}

\title{Boltzmann Exploration Done Right} 

\author{
  Nicol\`o Cesa-Bianchi \\
  Universit\`a degli Studi di Milano\\
  Milan, Italy\\
  \texttt{nicolo.cesa-bianchi@unimi.it}
  \And
  Claudio Gentile \\
  INRIA Lille -- Nord Europe\\
  Villeneuve d'Ascq, France \\
  \texttt{cla.gentile@gmail.com}
  \And
  G\'abor Lugosi \\
  ICREA \& Universitat Pompeu Fabra \\
  Barcelona, Spain \\
  \texttt{gabor.lugosi@gmail.com}
  \And
  Gergely Neu \\
  Universitat Pompeu Fabra \\
  Barcelona, Spain \\
  \texttt{gergely.neu@gmail.com}
 }

\begin{document}

\maketitle

\begin{abstract} 
Boltzmann exploration is a classic strategy for sequential decision-making under uncertainty, and is one of the most standard tools in 
Reinforcement Learning (RL). Despite its widespread use, there is virtually no theoretical understanding about the limitations or the 
actual benefits of this exploration scheme. Does it drive exploration in a meaningful way? Is it prone to misidentifying the optimal 
actions or spending too much time exploring the suboptimal ones? What is the right tuning for the learning rate? In this paper, we address 
several of these questions for the classic setup of stochastic multi-armed bandits. One of our main results is showing that the Boltzmann 
exploration strategy with any monotone learning-rate sequence will induce suboptimal behavior. As a remedy, we offer a simple non-monotone 
schedule that guarantees near-optimal performance, albeit only when given prior access to key problem parameters that are typically not 
available in practical situations (like the time horizon $T$ and the suboptimality gap $\Delta$). More importantly, we propose a novel 
variant that uses different learning rates for different arms, and achieves a distribution-dependent regret bound of order $\frac{K\log^2 
T}{\Delta}$ and a distribution-independent bound of order $\sqrt{KT}\log K$ without requiring such prior knowledge. To demonstrate 
the flexibility of our technique, we also propose a variant that guarantees the same performance bounds even if the rewards are 
heavy-tailed.
\end{abstract}

\section{Introduction}
Exponential weighting strategies are fundamental tools in a variety of areas, including Machine Learning, Optimization, Theoretical 
Computer Science, and Decision Theory \citep{ArHaKa12}. Within Reinforcement Learning \citep{sutton,Sze10}, exponential weighting schemes 
are broadly used for balancing exploration and exploitation, and are equivalently referred to as Boltzmann, Gibbs, or softmax exploration 
policies \citep{dyna,kaelbling1996reinforcement,SMSM99,PePre02}. In the most common version of Boltzmann exploration, the probability of 
choosing an arm is proportional to an exponential function of the empirical mean of the reward of that arm. Despite the popularity of this 
policy, very little is known about its theoretical performance, even in the simplest reinforcement learning setting of \emph{stochastic 
bandit problems}.

The variant of Boltzmann exploration we focus on in this paper is defined by
\begin{equation}\label{eq:Boltzmann}
 p_{t,i} \propto e^{\eta_t \hmu_{t,i}},
\end{equation}
where $p_{t,i}$ is the probability of choosing arm $i$ in round $t$, $\hmu_{t,i}$ is the empirical average of the rewards obtained from arm 
$i$ up until round $t$, and $\eta_t > 0$ is the \emph{learning rate}. This variant is broadly used in reinforcement learning 
\citep{sutton,Sze10,kaelbling1996reinforcement,VM05,kuleshov2014algorithms,OVW16}. In the multiarmed bandit literature, exponential-weights 
algorithms are also widespread, but they typically use \emph{importance-weighted} estimators for the rewards ---see, e.g., 
\citep{auer1995gambling,auer2002bandit} (for the nonstochastic setting), \citep{CBF98} (for the stochastic setting), and \citep{SS14} (for 
both stochastic and nonstochastic regimes). The theoretical behavior of these algorithms is generally well understood. For example, in the 
stochastic bandit setting \citet{SS14} show a regret bound of order $\frac{K\log^2 T}{\Delta}$, where $\Delta$ is the suboptimality gap 
(i.e., the smallest difference between the mean reward of the optimal arm and the mean reward of any other arm).

In this paper, we aim to achieve a better theoretical understanding of the \emph{basic} variant of the Boltzmann exploration policy that 
relies on the empirical mean rewards. We first show that any monotone learning-rate schedule will inevitably force the 
policy to either spend too much time drawing suboptimal arms or completely fail to identify the optimal arm. Then, we show that a specific 
non-monotone schedule of the learning rates can lead to regret bound of order $\frac{K\log T}{\Delta^2}$. However, the learning schedule has 
to rely on full knowledge of the gap $\Delta$ and the number of rounds $T$. Moreover, our negative result helps us to identify a crucial 
shortcoming of the Boltzmann exploration policy: it does not reason about the uncertainty of the empirical reward estimates. 
To alleviate this issue, we propose a variant that takes this uncertainty into account by using separate learning rates for each arm, where 
the learning rates account for the uncertainty of each reward estimate. We show that the resulting algorithm guarantees a 
distribution-dependent regret bound of order $\frac{K \log^2 T}{\Delta}$, and a distribution-independent bound of order $\sqrt{KT}\log K$.

Our algorithm and analysis is based on the so-called \emph{Gumbel--softmax} trick that connects the exponential-weights distribution with 
the maximum of independent random variables from the Gumbel distribution.

\section{The stochastic multi-armed bandit problem}
Consider the setting of stochastic multi-armed bandits: each arm $i\in[K]\stackrel{\mbox{\tiny{def}}}{=}\ev{1,2,\dots,K}$ yields a reward 
with distribution $\nu_i$, mean $\mu_i$, with the optimal mean reward being $\mu^* = \max_i \mu_i$. Without loss of generality, we will 
assume that the optimal arm is unique and has index~1. The gap of arm $i$ is defined as 
$\Delta_i = \mu^* - \mu_i$. 
We consider a repeated game between the learner and the 
environment, where in each round $t=1,2,\dots$, the following steps are repeated:
\begin{enumerate}
 \item The learner chooses an arm $I_t\in[K]$,
 \item the environment draws a reward $X_{t,I_t}\sim \nu_{I_t}$ independently of the past,
 \item the learner receives and observes the reward $X_{t,I_t}$.
\end{enumerate}
The performance of the learner is measured in terms of the \emph{pseudo-regret} defined as
\begin{equation}\label{eq:regret}
 R_T = \mu^*T - \sum_{t=1}^T \EE{X_{t,I_t}} = \mu^*T - \EE{\sum_{t=1}^T \mu_{I_t}} = \EE{\sum_{t=1}^T \Delta_{I_t}} = \sum_{i=1}^K \Delta_i 
\EE{N_{T,i}},
\end{equation}
where we defined $N_{t,i} = \sum_{s=1}^{t} \II{I_s = i}$, that is, the number of times that arm $i$ has been chosen until the end of round 
$t$. We aim at constructing algorithms that guarantee that the regret grows sublinearly. 

We will consider the above problem under various assumptions of the distribution of the rewards. For most of our results, we will assume 
that each $\nu_i$ is \emph{$\sigma$-subgaussian} with a known parameter $\sigma>0$, that is, that
\[
 \EE{e^{y \pa{X_{1,i} - \EE{X_{1,i}}}}} \le e^{\sigma^2 y^2/2}
\]
holds for all $y\in\real$ and $i\in[K]$. It is easy to see that any random variable bounded in an interval of length $B$ is 
$B^2/4$-subgaussian. Under this assumption, it is well known that any algorithm will suffer a regret of at least $\Omega\pa{\sum_{i>1} 
\frac{\sigma^2 \log T}{\Delta_i}}$, as shown in the classic paper of \citet{LR85}. There exist several algorithms guaranteeing matching 
upper bounds, even for finite horizons \citep{auer2002finite,cappe2013kullback,kaufmann12thompson}. We refer to the 
survey of \citet{bubeck12survey} for an exhaustive treatment of the topic.

\section{Boltzmann exploration done wrong}
We now formally describe the heuristic form of Boltzmann exploration that is commonly used in the reinforcement learning literature 
\citep{sutton,Sze10,kaelbling1996reinforcement}. This strategy works by maintaining the empirical estimates of each $\mu_i$ defined as
\begin{equation}\label{eq:mu_est}
 \hmu_{t,i} = \frac{\sum_{s=1}^t X_{s,i} \II{I_s = i}}{N_{t,i}}
\end{equation}
and computing the exponential-weights distribution (\ref{eq:Boltzmann})
for an appropriately tuned sequence of \emph{learning rate} parameters $\eta_t>0$ (which are often referred to as the \emph{inverse 
temperature}). As noted on several occasions in the literature, finding the right schedule for $\eta_t$ can be very difficult in practice 
\citep{kaelbling1996reinforcement,VM05}. Below, we quantify this difficulty by showing that natural learning-rate schedules may fail to 
achieve near-optimal regret guarantees. More precisely, they may draw suboptimal arms too much even after having estimated all the means 
correctly, 
or commit too early to a suboptimal arm and never recover afterwards. We partially circumvent this issue by proposing an admittedly 
artificial 
learning-rate schedule that actually guarantees near-optimal performance. However, a serious limitation of this schedule is that it relies 
on prior knowledge of problem parameters $\Delta$ and $T$ that are typically unknown at the beginning of the learning procedure. These 
observations lead us to the conclusion that the Boltzmann exploration policy as described by Equations~(\ref{eq:Boltzmann}) and 
\eqref{eq:mu_est} is no more effective for regret minimization than the simplest alternative of $\varepsilon$-greedy exploration 
\citep{sutton,auer2002finite}. 

Before we present our own technical results, we mention that \citet{SiJaLiSz00} propose a learning-rate schedule $\eta_t$ for Boltzmann 
exploration that simultaneously guarantees that all arms will be drawn infinitely often as $T$ goes to infinity, and that the policy 
becomes greedy in the limit. This property is proven by choosing a learning-rate schedule adaptively to ensure that in each round $t$, 
each arm gets drawn with probability at least $\frac 1t$, making it similar in spirit to $\varepsilon$-greedy exploration. While this 
strategy clearly leads to sublinear regret, it is easy to construct examples on which it suffers a regret of at least 
$\Omega\pa{T^{1-\alpha}}$ for any small $\alpha>0$. In this paper, we pursue a more ambitious goal: we aim to find out whether Boltzmann 
exploration can actually guarantee polylogarithmic regret. In the rest of this section, we present both negative and positive results 
concerning the standard variant of Boltzmann exploration, and then move on to providing an efficient generalization that achieves 
consistency in a more universal sense.

\subsection{Boltzmann exploration with monotone learning rates is suboptimal}
In this section, we study the most natural variant of Boltzmann exploration that uses a monotone learning-rate schedule. It is easy to see 
that in order to achieve sublinear regret, the learning rate $\eta_t$ needs to \emph{increase} with $t$ so that the suboptimal arms are 
drawn with less and less probability as time progresses. For the sake of clarity, we study the simplest possible setting with two arms with 
a gap of $\Delta$ between their means.
We first show that, asymptotically, the learning rate has to increase at least at 
a rate $\frac{\log t}{\Delta}$ even when the mean rewards are perfectly known. In other words, this is 
the minimal affordable learning rate.
\begin{proposition}
 Let us assume that $\hmu_{t,i} = \mu_i$ for all $t$ and both $i$. If $\eta_t = o\pa{\frac{\log(t\Delta^2)}{\Delta}}$, then the regret 
grows at least as fast as $R_T = \omega\pa{\frac{\log T}{\Delta}}$.
\end{proposition}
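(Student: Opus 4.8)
The plan is to use the idealized assumption $\hmu_{t,i}=\mu_i$ to obtain a closed form for the probability of pulling the suboptimal arm, and then to show that the hypothesis $\eta_t = o\pa{\frac{\log(t\Delta^2)}{\Delta}}$ forces this probability to decay too slowly for the regret to stay at the optimal order $\frac{\log T}{\Delta}$. Labelling the optimal arm $1$ and the suboptimal arm $2$ so that $\mu_1-\mu_2=\Delta$, the rule (\ref{eq:Boltzmann}) together with $\hmu_{t,i}=\mu_i$ gives the exact, deterministic expression
$$p_{t,2} = \frac{e^{\eta_t\mu_2}}{e^{\eta_t\mu_1}+e^{\eta_t\mu_2}} = \frac{1}{1+e^{\eta_t\Delta}}.$$
Since only arm $2$ contributes, (\ref{eq:regret}) collapses to $R_T = \Delta\,\EE{N_{T,2}} = \Delta\sum_{t=1}^T p_{t,2}$, so the whole claim becomes a purely analytic lower bound on a deterministic sum, with no probabilistic argument needed.

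Next I would turn the little-$o$ hypothesis into a pointwise bound. Fixing an arbitrary constant $c\in(0,1)$, the assumption $\eta_t\Delta = o\pa{\log(t\Delta^2)}$ provides a threshold $t_c$ such that $\eta_t\Delta\le c\log(t\Delta^2)$, hence $e^{\eta_t\Delta}\le (t\Delta^2)^c$, for all $t\ge t_c$. Combining $\frac{1}{1+e^{\eta_t\Delta}}\ge\frac12 e^{-\eta_t\Delta}$ (valid since $\eta_t\Delta>0$) with this bound yields $p_{t,2}\ge \frac12(t\Delta^2)^{-c}$ for all large $t$. Summing and comparing $\sum_t t^{-c}$ with $\int x^{-c}\,dx\sim\frac{T^{1-c}}{1-c}$ gives $R_T = \Omega\pa{\Delta^{1-2c}\,T^{1-c}}$; for fixed $\Delta$ and $c<1$ this polynomial rate dominates $\frac{\log T}{\Delta}$, which is precisely $R_T = \omega\pa{\frac{\log T}{\Delta}}$. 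A useful sanity check is to insert the borderline rate $\eta_t=\frac{\log(t\Delta^2)}{\Delta}$: then $p_{t,2}=\frac{1}{1+t\Delta^2}$ and $R_T\approx\frac1\Delta\sum_t\frac1t\approx\frac{\log T}{\Delta}$, confirming both that $\frac{\log(t\Delta^2)}{\Delta}$ is the right threshold and that the exponent $2$ inside the logarithm is what pins down the optimal constant.

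The one delicate point is the handling of the little-$o$: the threshold $t_c$ depends on $c$, so I must commit to a single fixed $c\in(0,1)$ before summing rather than letting $c\to0$ (the latter would make the implied constants blow up). Everything after that is a routine integral/series comparison. The conceptual takeaway is that a learning rate even slightly below the $\frac{\log(t\Delta^2)}{\Delta}$ threshold leaves a polynomially heavy tail of suboptimal pulls, so even perfect knowledge of the means cannot rescue an under-aggressive monotone schedule.
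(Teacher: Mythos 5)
Your proposal is correct and follows essentially the same route as the paper: the exact formula $p_{t,2}=\frac{1}{1+e^{\eta_t\Delta}}$, the lower bound $p_{t,2}\ge\frac12 e^{-\eta_t\Delta}$, and the little-$o$ hypothesis measured against the benchmark $\frac{\log(t\Delta^2)}{\Delta}$. The only (harmless) difference is that you instantiate the little-$o$ with a fixed exponent $c\in(0,1)$ and obtain a polynomial lower bound $\Omega\pa{\Delta^{1-2c}T^{1-c}}$, which is in fact stronger than the paper's direct $\omega$-chain yielding $\omega\pa{\frac{\log T}{\Delta}}$.
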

\label{prop:slow}
\begin{proof}
 Let us define $\eta^*_t = \frac{\log(t\Delta^2)}{\Delta}$ for all $t$. The probability of pulling the suboptimal arm can be 
asymptotically bounded as
 \begin{align*}
  \PP{I_t = 2} &= \frac{1}{1 + e^{\eta_t \Delta}} \ge \frac {e^{-\eta_t \Delta}}{2} = \omega\pa{\frac {e^{-\eta_t^* \Delta}}{2}} = 
\omega\pa{\frac{1}{\Delta^2 t}}.
 \end{align*}
 Summing up for all $t$, we get that the regret is at least
 \begin{align*}
  R_T = \Delta \sum_{t=1}^T \PP{I_t = 2} = \omega\pa{\sum_{t=1}^T \frac{1}{\Delta^2 t}} = \omega\pa{\frac{\log T}{\Delta}},
 \end{align*}
 thus proving the statement.
\end{proof}
This simple proposition thus implies an asymptotic lower bound on the schedule of learning rates $\eta_t$. In contrast, Theorem 
\ref{th:lowerboundgabor} below shows that all learning rate sequences that grow faster than $2\log t$ yield a linear regret, provided this 
schedule is adopted since the beginning of the game. This should be contrasted with Theorem \ref{th:upperbound}, which exhibits a schedule 
achieving logarithmic regret where $\eta_t$ grows faster than $2\log t$ only after the first $\tau$ rounds.
\begin{theorem}\label{th:lowerboundgabor}
There exists a 2-armed stochastic bandit problem with rewards bounded in $[0,1]$ where Boltzmann exploration using any learning rate 
sequence $\eta_t$ such that $\eta_t > 2\log t$ for all $t \geq 1$ has regret $R_T = \Omega(T)$. 
\end{theorem}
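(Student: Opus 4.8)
The plan is to exhibit a two-armed instance on which Boltzmann exploration, run with \emph{any} schedule satisfying $\eta_t > 2\log t$, commits to the suboptimal arm after a single unlucky draw and then provably never recovers, thereby incurring linear regret with constant probability. Concretely, I would take arm~$1$ (the optimal arm) to be Bernoulli with mean $\mu_1 = p$ for some $p$ close to $1$, and arm~$2$ (the suboptimal arm) to deliver the deterministic reward $c$, chosen so that $\tfrac12 < c < p$; the gap is then $\Delta = p - c > 0$ and both rewards lie in $[0,1]$. As usual I assume each arm is drawn once during an initialization phase (rounds $1$ and $2$), so that $\hmu_{2,2} = c$ holds deterministically.

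The crux is the following \emph{bad event} $E$: the single initial sample of arm~$1$ equals $0$, so that $\hmu_{2,1} = 0$; this has constant probability $\PP{E} = 1 - p$. I would then observe that the state $\pa{\hmu_{t,1},\hmu_{t,2}} = (0,c)$ is ``sticky'': as long as arm~$1$ is not drawn again its empirical mean stays $0$, and since arm~$2$ is deterministic its empirical mean stays $c$, so the state is preserved exactly by every draw of arm~$2$. In this state the probability of drawing arm~$1$ in round $t$ is
\begin{equation*}
 p_{t,1} = \frac{1}{1 + e^{\eta_t c}} \le e^{-\eta_t c} < t^{-2c},
\end{equation*}
where the final inequality uses the hypothesis $\eta_t > 2\log t$.

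Since $c > \tfrac12$ gives $2c > 1$, the bound $t^{-2c}$ is summable. Letting $T_1$ denote the first round after initialization in which arm~$1$ is drawn and decomposing over the event that the bad state survives through round $t-1$, a union bound gives $\PP{T_1 < \infty \mid E} \le \sum_{t \ge 3} t^{-2c}$, a quantity that can be made strictly less than $1$ by taking $c$ (and hence $p \in (c,1)$) sufficiently close to $1$. Consequently, conditioned on $E$, with probability at least $1 - \sum_{t\ge 3} t^{-2c} > 0$ the learner draws arm~$2$ in \emph{every} round $t \ge 3$. On this event $N_{T,2} \ge T - 2$, so $\EE{N_{T,2}} \ge \PP{E}\pa{1 - \sum_{t\ge 3} t^{-2c}}(T-2)$, which is a positive constant times $T$; by the regret decomposition $R_T = \Delta\,\EE{N_{T,2}}$ this yields $R_T = \Omega(T)$.

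I expect the main obstacle to be the conditioning bookkeeping in the third step: one must argue rigorously that, conditioned on $E$ and on the bad state having survived through round $t-1$, the chance of drawing arm~$1$ at round $t$ is \emph{exactly} $p_{t,1}$, so that the total escape probability is controlled by the summable series and cannot grow with $T$. The determinism of arm~$2$ is precisely what renders the state invariant and keeps this argument clean. A secondary tension is that $\PP{E} = 1-p$ pushes $p$ downward while summability of $\sum_{t\ge3} t^{-2c}$ below $1$ pushes $c$ (hence $p > c$) toward $1$; but any fixed choice such as $c = 0.9,\ p = 0.95$ meets both requirements at once and suffices for the $\Omega(T)$ conclusion.
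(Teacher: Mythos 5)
Your proof is correct, and it follows the same conceptual skeleton as the paper's: condition on a constant-probability bad event under which the optimal arm's empirical mean is $0$, observe that the resulting state is absorbing for the empirical means, and show that the total probability of ever escaping is bounded away from $1$ because $\eta_t>2\log t$ makes the per-round escape probabilities summable.

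The one substantive difference is your choice of instance, and it buys something real. The paper sets the suboptimal arm's deterministic reward to $\tfrac12$, so the escape probability at round $t$ is only bounded by $e^{-\eta_t/2}<t^{-1}$, which is \emph{not} summable; to fix this the paper must (i) strengthen the hypothesis mid-proof to $\eta_t\ge c\log t$ for a fixed $c>2$, and (ii) condition on a bad streak of length $t_0$ (all of arm $1$'s first $t_0$ draws returning $0$, probability $(\tfrac12-\Delta)^{t_0}$) so that the tail sum $\sum_{t>t_0}t^{-c/2}$ drops below $\tfrac12$. You instead set the suboptimal arm's reward to a constant $c\in(\tfrac12,1)$, so the escape probability is bounded by $t^{-2c}$ with $2c>1$; the series $\sum_{t\ge3}t^{-2c}$ then converges to a value below $1$ for, e.g., $c=0.9$, under exactly the stated pointwise condition $\eta_t>2\log t$, and a single unlucky draw of arm $1$ suffices as the bad event. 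Your version is thus both simpler (no $t_0$-length streak, no auxiliary constant $c>2$) and, strictly speaking, a proof of the theorem as literally stated, whereas the paper's argument covers schedules with $\eta_t\ge c\log t$ for some uniform $c>2$. The remaining details you flag (the exact conditional escape probability given survival, and the first-escape-time union bound) are handled the same way in both proofs and pose no difficulty, since the algorithm's randomization at round $t$ depends only on the current empirical means.
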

\begin{proof}
Consider the case where arm $2$ gives a reward deterministically equal to $\frac{1}{2}$  whereas the optimal arm $1$ has a Bernoulli 
distribution of parameter $p=\frac{1}{2}+\Delta$ for some $0 < \Delta < \frac{1}{2}$. Note that the regret of any algorithm satisfies $R_T 
\ge \Delta(T-t_0)\PP{\forall t>t_0 ,\ I_t = 2}$. Without loss of generality, assume that $\wh{\mu}_{1,1}=0$ and $\wh{\mu}_{1,2}=1/2$. Then 
for all $t$, independent of the algorithm, $\wh{\mu}_{t,2}=1/2$ and
\[
    p_{t,1}= \frac{e^{\eta_t\mathrm{Bin}(N_{t-1,1},p)}}{e^{\eta_t/2}+ e^{\eta_t\mathrm{Bin}(N_{t-1,1},p)}}
\quad \text{and} \quad
	p_{t,2}= \frac{e^{\eta_t/2}}{e^{\eta_t/2}+ e^{\eta_t\mathrm{Bin}(N_{t-1,1},p)}}~.
\]
For $t_0\ge 1$, Let $E_{t_0}$ be the event that $\mathrm{Bin}(N_{t_0,1},p)=0$, that is, up to time $t_0$, arm $1$ gives only zero reward 
whenever it is sampled. Then
\begin{align*}
	\PP{\forall t>t_0 \ I_t = 2}
&\ge
	\PP{E_{t_0}} \Big(1 - \PP{\exists t > t_0 \ I_t=1 \mid E_{t_0}}\Big)
\\&\ge
	\left(\frac{1}{2}-\Delta\right)^{t_0} \Big(1 - \PP{\exists t > t_0 \ I_t=1 \mid E_{t_0}}\Big).
\end{align*}
For $t>t_0$, let $A_{t,t_0}$ be the event that arm $1$ is sampled at time $t$ but not at any of the times $t_0+1,t_0+2,\ldots,t-1$.
Then, for any $t_0\ge 1$,
\begin{align*}
	\PP{\exists t > t_0 \ I_t=1 \mid E_{t_0}}
&=
	\PP{\exists t> t_0 \ A_{t,t_0} \mid E_{t_0}}
\le
	\sum_{t>t_0} \PP{A_{t,t_0} \mid E_{t_0}}
\\&=
	\sum_{t>t_0} \frac{1}{1+e^{\eta_t/2}} \prod_{s=t_0+1}^{t-1}\left(1- \frac{1}{1+e^{\eta_s/2}}\right)
\le
	\sum_{t>t_0} e^{-\eta_t/2}~.
\end{align*}
Therefore
\[
	R_T \ge \Delta(T-t_0) \left(\frac{1}{2}-\Delta\right)^{t_0} \left(1 - \sum_{t>t_0} e^{-\eta_t/2}\right)~.
\]
Assume $\eta_t \ge c\log t$ for some $c > 2$ and for all $t \ge t_0$. Then
\begin{align*}
	\sum_{t>t_0} e^{-\eta_t/2}
\le
	\sum_{t>t_0} t^{-\frac{c}{2}}
\le
	\int_{t_0}^{\infty} x^{-\frac{c}{2}}\,dx
=
	\left(\frac{c}{2}-1\right) t_0^{-(\frac{c}{2}-1)} \le \frac{1}{2}
\end{align*}
whenever $t_0 \ge (2a)^{\frac{1}{a}}$ where $a = \frac{c}{2}-1$. This implies $R_T = \Omega(T)$.
\end{proof}

\subsection{A learning-rate schedule with near-optimal guarantees}
\newcommand{\ve}{\varepsilon}
The above negative result is indeed heavily relying on the assumption that $\eta_t > 2\log t$ holds since the beginning. If we instead 
start off from a constant learning rate which we keep for a logarithmic number of rounds, then a logarithmic regret bound can be 
shown. Arguably, this results in a rather simplistic exploration scheme, which can be essentially seen as an \emph{explore-then-commit} 
strategy (e.g., \cite{gkl16}). Despite its simplicity, this strategy can be shown to achieve near-optimal performance if the parameters 
are tuned as a function the suboptimality gap $\Delta$ (although its regret scales at the suboptimal rate of $1/\Delta^2$ with this 
parameter). The following theorem (proved in Appendix~\ref{app:upperbound}) states this performance guarantee. 

\begin{theorem}\label{th:upperbound}
Assume the rewards of each arm are in $[0,1]$ and let
$
	\tau
=
	\frac{16eK\log T}{\Delta^2}
$.
Then the regret of Boltzmann exploration with learning rate $\eta_t = \II{t < \tau} + \frac{\log(t\Delta^2)}{\Delta}\II{t \ge \tau}$ 
satisfies
\[
	R_T \le \frac{16eK\log T}{\Delta^2} + \frac{9K}{\Delta^2}~.
\]
\end{theorem}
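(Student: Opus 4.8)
The plan is to treat this as an explore-then-commit analysis: split the horizon at $\tau$ and bound $R_T = \sum_i \Delta_i \EE{N_{T,i}}$ by controlling the two phases separately. For the first phase $t<\tau$, where $\eta_t = 1$, I would simply bound the per-round regret by $\max_i \Delta_i \le 1$ and charge the whole phase at most $\tau = \frac{16eK\log T}{\Delta^2}$, which already yields the first term. The real content of this phase is not its cost but the quality of the estimates it produces: since rewards lie in $[0,1]$ and $\eta_t=1$, every weight $e^{\hmu_{t,j}}$ lies in $[1,e]$, so each arm is sampled with probability at least $\frac{1}{eK}$ in every round. A Chernoff bound then shows that, with high probability, $N_{\tau,i} \gtrsim \frac{\log T}{\Delta^2}$ simultaneously for all arms, which is exactly the sample size needed to pin down each $\mu_i$ to within a constant fraction of $\Delta$.

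For the commit phase $t \ge \tau$ I would bound $\PP{I_t = i} = \EE{p_{t,i}}$ and use that, whenever arm $1$ carries the largest empirical mean, $p_{t,i} \le e^{-\eta_t(\hmu_{t,1} - \hmu_{t,i})}$. The schedule $\eta_t = \frac{\log(t\Delta^2)}{\Delta}$ is engineered precisely so that an empirical gap close to the true gap $\Delta_i$ turns this into $p_{t,i} \lesssim (t\Delta^2)^{-\Delta_i/\Delta}$. To first approximation, pretending $\hmu_{t,1}-\hmu_{t,i}\approx\Delta_i$, the fact that $\Delta_i/\Delta \ge 1$ makes the tail $\sum_{t\ge\tau}(t\Delta^2)^{-\Delta_i/\Delta}$ dominated by its first term and bounded independently of $T$; the clean choice of $\tau$ enters here through $\tau\Delta^2 = 16eK\log T \ge 1$. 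Multiplying by $\Delta_i$ and summing over the (at most $K$) suboptimal arms is what I expect to assemble into the second term $\frac{9K}{\Delta^2}$.

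The step I expect to be the main obstacle is making the empirical gap in the exponent genuinely close to $\Delta_i$, and this is most delicate for arms whose gap is near the minimum $\Delta$. With only $\Theta(\log T/\Delta^2)$ exploration samples the estimate of $\mu_i$ is accurate only to within a constant multiple of $\Delta$, so replacing $\hmu_{t,1}-\hmu_{t,i}$ by $\Delta_i$ throws away a constant fraction of the exponent; for the minimum-gap arm this would naively push the exponent below $1$ and destroy summability. To handle this I would not freeze the exploration estimates but argue through a peeling bound: write $\EE{N_{T,i}} \le u_i + \sum_{t>\tau}\PP{I_t = i,\, N_{t-1,i} \ge u_i}$, and on the event $N_{t-1,i}\ge u_i$ control the fluctuation of the empirical mean by the sub-Gaussian moment bound $\EE{e^{\eta_t(\hmu_{t,i}-\mu_i)}} \le e^{\eta_t^2/(8N_{t,i})}$, which is small once $u_i$ dominates $\eta_t^2$. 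Choosing $u_i$ to balance the explicit $u_i$ term against the summed probabilities recovers an effective exponent arbitrarily close to $\Delta_i/\Delta$, while the residual events where the estimates deviate, being summable in $t$ and $i$, contribute only a further $O(K/\Delta^2)$. The estimate of arm $1$ requires a parallel but easier treatment, since in the commit phase arm $1$ is drawn $\Omega(t)$ times and $\hmu_{t,1}$ concentrates rapidly around $\mu_1$, so the event that it remains the empirical leader holds with overwhelming probability throughout.
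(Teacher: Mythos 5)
Your overall architecture matches the paper's: charge the full exploration phase at cost $\tau$, use the fact that with $\eta_t=1$ and rewards in $[0,1]$ every arm is drawn with probability at least $\frac{1}{eK}$ per round plus a multiplicative Chernoff bound to guarantee $\Omega(\log T/\Delta^2)$ samples per arm by time $\tau$, and in the commit phase bound $\PP{I_t=i}$ via $p_{t,i}\le e^{\eta_t(\hmu_{t-1,i}-\hmu_{t-1,1})}$. Two issues arise in your treatment of the commit phase, however. First, a bookkeeping point: at the critical exponent $\Delta_i/\Delta=1$ the tail $\sum_{t\ge\tau}(t\Delta^2)^{-1}$ is \emph{not} dominated by its first term --- it is $\Theta(\log T/\Delta^2)$ --- so this contribution belongs with the $\frac{16eK\log T}{\Delta^2}$ term rather than the $\frac{9K}{\Delta^2}$ term.

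The more substantive gap is in your proposed resolution of the ``empirical gap versus true gap'' difficulty. The moment bound $\EE{e^{\eta_t(\hmu_{t,i}-\mu_i)}}\le e^{\eta_t^2/(8N_{t,i})}$ yields an $O(1)$ correction only when $N_{t,i}\gtrsim \eta_t^2 = \log^2(t\Delta^2)/\Delta^2$. The exploration phase supplies only $\Theta(\log T/\Delta^2)$ samples per arm, and taking $u_i$ of order $\log^2 T/\Delta^2$ injects a $\Delta_i u_i \approx \log^2 T/\Delta$ term that is not dominated by the claimed bound (this is exactly the $\log^2 T$-versus-$\log T$ distinction separating this theorem from Theorem~\ref{thm:subgauss}, whose proof does go through moment generating functions). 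The paper avoids exponentiating the random deviation altogether: it splits $\II{I_t=i}$ three ways, into the event $\ev{\hmu_{t-1,i}-\hmu_{t-1,1}<-\Delta_i/2}$ --- on which the deterministic bound $p_{t,i}\le e^{-\eta_t\Delta_i/2}$ applies --- and the two \emph{fixed-threshold} deviation events $\ev{\hmu_{t-1,1}\le\mu_1-\Delta_i/4}$ and $\ev{\hmu_{t-1,i}\ge\mu_i+\Delta_i/4}$, each of which has probability $O(1/T)$ by a plain Chernoff bound (with a union over the possible sample counts $s>t_1$) once $N\gtrsim\log T/\Delta^2$. That decomposition, rather than a peeling/mgf argument, is the missing ingredient; without it your route delivers a bound with an extra logarithmic factor rather than the one stated.
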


\section{Boltzmann exploration done right}
We now turn to give a variant of Boltzmann exploration that achieves near-optimal guarantees without prior knowledge of either $\Delta$ or 
$T$. Our approach is based on the observation that the distribution $p_{t,i}\propto \exp\pa{\eta_t \hmu_{t,i}}$ can be equivalently 
specified by the rule
$
 I_t = \argmax_j \ev{\eta_t \hmu_{t,j} + Z_{t,j}}
$,
where $Z_{t,j}$ is a standard Gumbel random variable\footnote{The cumulative density function of a standard Gumbel random variable is $F(x) 
= \exp(-e^{-x +\gamma})$ where $\gamma$ is the Euler-Mascheroni constant.} drawn independently for each arm $j$ (see, e.g., 
\citet{ALTS14} and the references therein). As we saw in the previous section, this scheme fails to guarantee consistency in general, as 
it does not capture the uncertainty of the reward estimates. We now propose a variant that takes this uncertainty into account by choosing 
different scaling factors for each perturbation. 
In particular, we will use the simple choice
$
 \beta_{t,i} = \sqrt{{C^2}\big/{N_{t,i}}}
$
with some constant $C>0$ that will be specified later. Our algorithm operates by independently drawing perturbations $Z_{t,i}$ from a 
standard Gumbel 
distribution for each arm $i$, then choosing action
\begin{equation}\label{eq:gumbel}
 I_{t+1} = \argmax_{i} \ev{\hmu_{t,i} + \beta_{t,i} Z_{t,i}}.
\end{equation}
We refer to this algorithm as \emph{Boltzmann--Gumbel exploration}, or, in short, BGE. 
Unfortunately, the probabilities $p_{t,i}$ no longer have a simple closed form, nevertheless the algorithm is very straightforward to 
implement. Our main positive result is showing the following 
performance guarantee about the algorithm.\footnote{We use the notation $\log_+(\cdot) = \max\{0,\cdot\}$.}
\begin{theorem}\label{thm:subgauss}
 Assume that the rewards of each arm are $\sigma^2$-subgaussian and let $c>0$ be arbitrary. Then, the regret of Boltzmann--Gumbel 
exploration satisfies
 \[
  R_T \le 
  \sum_{i=2}^K \frac{9C^2 \log^2_+\pa{T\Delta_i/c^2}}{\Delta_i}
  +
  \sum_{i=2}^K \frac{c^2 e^\gamma + 18C^2e^{\sigma^2/2C^2} \pa{1 + e^{-\gamma}}}{\Delta_i} + \sum_{i=2}^K \Delta_i.
 \]
 In particular, choosing $C = \sigma$ and $c = \sigma$ guarantees a regret bound of 
 \[
  R_T = O\pa{\sum_{i=2}^K \frac{\sigma^2 \log^2(T\Delta_i^2/\sigma^2)}{\Delta_i}}.
 \]
\end{theorem}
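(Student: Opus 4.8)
The plan is to bound the regret through its decomposition $R_T=\sum_{i=2}^K\Delta_i\EE{N_{T,i}}$ from \eqref{eq:regret}, so the whole task reduces to controlling the expected number of pulls $\EE{N_{T,i}}$ of each suboptimal arm $i$. Writing $\theta_{t,i}=\hmu_{t,i}+\beta_{t,i}Z_{t,i}$ for the perturbed estimate, the rule \eqref{eq:gumbel} says arm $i$ is played at $t+1$ only if $\theta_{t,i}$ is maximal, and in particular only if $\theta_{t,i}\ge\theta_{t,1}$. The first move is the elementary but crucial observation that at most $u_i$ pulls of arm $i$ can occur while its count is below any deterministic threshold $u_i$, so that $N_{T,i}\le u_i+\sum_t\II{I_{t+1}=i,\ N_{t,i}\ge u_i}$. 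Fixing the reference level $x_0=\mu_1-\Delta_i/2=\mu_i+\Delta_i/2$ and splitting the last sum according to whether $\theta_{t,i}$ exceeds $x_0$, I would bound $\EE{N_{T,i}}$ by $u_i$, plus an ``over-estimation'' term $\sum_t\II{\theta_{t,i}>x_0,\ N_{t,i}\ge u_i}$, plus an ``under-estimation'' term $\sum_t\II{I_{t+1}=i,\ \theta_{t,i}\le x_0}$; the point of the latter split is that $I_{t+1}=i$ together with $\theta_{t,i}\le x_0$ forces $\theta_{t,1}\le x_0$, i.e.\ the optimal arm to be underestimated.

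For the over-estimation term the key computation is a per-round tail bound obtained by conditioning on the Gumbel variable and using subgaussianity: for count $N_{t,i}=n$,
\[ \PPcc{\theta_{t,i}>\mu_i+s}{\F_t}\le e^{\gamma}\,e^{\sigma^2/(2C^2)}\,e^{-s\sqrt{n}/C}, \]
since $\PP{Z>z}\le e^{\gamma-z}$ for a standard Gumbel and $\EE{e^{(\sqrt n/C)(\hmu_{t,i}-\mu_i)}}\le e^{\sigma^2/(2C^2)}$ for an $n$-sample mean of $\sigma^2$-subgaussian rewards. With $s=\Delta_i/2$ this probability drops below $1/(T\Delta_i^2)$ once $n\ge u_i$, provided $u_i\approx (C/\Delta_i)^2\log^2_+(T\Delta_i/c^2)$; summing the tiny per-round bound over the at most $T$ rounds then contributes only a constant-order term. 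This is exactly where the characteristic $\log^2 T$ enters: because the Gumbel perturbation has a merely exponential upper tail, the scale $\beta_{t,i}=C/\sqrt{N_{t,i}}$ must be driven down over $\Theta((C/\Delta_i)^2\log^2 T)$ samples before over-estimation becomes negligible, unlike the single $\log T$ of subgaussian (UCB/Thompson) indices. Note I deliberately drop the factor $\II{I_{t+1}=i}$ here \emph{before} handling the perturbation, so that no conditioning on ``the winning perturbation'' is ever required.

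The under-estimation term is where I expect the real work. Here arm $1$ is underestimated on rounds when it is \emph{not} played, so a naive count-threshold argument fails; instead I would exploit the independence of the Gumbel draws across arms via the ratio bound
\[ \PPcc{I_{t+1}=i,\ \theta_{t,i}\le x_0}{\F_t}\ \le\ \frac{1-q_t}{q_t}\,\PPcc{I_{t+1}=1}{\F_t},\qquad q_t=\PPcc{\theta_{t,1}>x_0}{\F_t}, \]
which follows by fixing the maximum of the other arms' perturbed estimates and comparing the conditional probabilities that $\theta_{t,1}$ falls below versus above it. Because $(1-q_t)/q_t$ is $\F_t$-measurable, summing against $\II{I_{t+1}=1}$ ties the whole term to pulls of the optimal arm; evaluating it at each count $m=N_{t,1}$ and using that under-estimating arm $1$ by $\Delta_i/2$ has subgaussian probability $\approx e^{-m\Delta_i^2/(8\sigma^2)}$ (the lower Gumbel tail being doubly exponential, hence harmless) makes $\sum_m\EE{(1-q_t)/q_t}$ a convergent $O(C^2/\Delta_i^2)$ series, producing the $e^{\sigma^2/2C^2}(1+e^{-\gamma})$ constant. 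The main obstacle I anticipate is the control of this ratio when $\hmu_{t,1}$ is atypically small so that $q_t$ is not bounded away from $0$: handling that event cleanly, rather than through a crude worst case, is what forces the convolution-style bookkeeping and is the most error-prone part of the argument.

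Finally I would assemble the three contributions, multiply by $\Delta_i$, optimize the free parameter $c$ that governs the threshold split (trading the $c^2 e^\gamma$ constant against the $\log_+(T\Delta_i/c^2)$ inside $u_i$), and add the $\sum_{i}\Delta_i$ term coming from the unavoidable single initial pull of each arm, at which $N_{t,i}=0$ and the rule must be seeded. Setting $C=\sigma$ and $c=\sigma$ then collapses $e^{\sigma^2/2C^2}$ to $\sqrt e$ and yields the stated $O\!\big(\sum_{i}\sigma^2\log^2(T\Delta_i^2/\sigma^2)/\Delta_i\big)$ bound.
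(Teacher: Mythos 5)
Your plan has the same architecture as the paper's proof: bound $\EE{N_{T,i}}$ by splitting the pulls of a suboptimal arm $i$ into rounds where its perturbed index over-estimates $\mu_i$ (controlled by the Gumbel upper tail together with the subgaussian moment generating function) and rounds where arm $i$ wins even though its index sits below a fixed threshold, which forces arm $1$ to be under-estimated and is controlled by exactly the probability-ratio lemma you state (Lemma~1 of Agrawal--Goyal), tying that count to pulls of arm $1$. The paper uses two thresholds, $x_i=\mu_i+\Delta_i/3$ and $y_i=\mu_1-\Delta_i/3$, and a three-way decomposition rather than your single midpoint threshold and two terms, but that difference is mostly cosmetic. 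One remark on your under-estimation term: the ``convolution-style bookkeeping'' you anticipate is unnecessary, because the explicit Gumbel c.d.f.\ gives the pointwise bound $\frac{1-q_t}{q_t}\le\exp\bigl((\mu_1-\hmu_{t,1}-\Delta_i/3)/\beta_{t,1}-\gamma\bigr)$ via $\frac{e^{-1/x}}{1-e^{-1/x}}\le x$, after which a single application of the subgaussian moment generating function at pull count $k$, $\EE{\exp\bigl((\mu_1-\hmu_{\tau_k,1})\sqrt{k}/C\bigr)}\le e^{\sigma^2/2C^2}$, finishes the job with no case analysis on whether $\hmu_{t,1}$ is atypically small.

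The one genuine gap is in your merged over-estimation term. The inequality $\PPcc{\theta_{t,i}>\mu_i+s}{\F_t}\le e^{\gamma}e^{\sigma^2/2C^2}e^{-s\sqrt{n}/C}$ is not valid as a conditional statement: given $\F_t$ the empirical mean $\hmu_{t,i}$ is deterministic, so its moment generating function cannot be invoked there; and after integrating out $\F_t$, the count $N_{t,i}$ is random and correlated with the rewards of arm $i$, so you cannot simply substitute $n=N_{t,i}\ge u_i$. Because you deliberately dropped $\II{I_{t+1}=i}$ \emph{before} handling the empirical mean, you also cannot re-index its deviations by pull count, and a union bound over the possible values $n\ge u_i$ inflates the per-round bound by a factor of order $C^2\log(T\Delta_i^2/c^2)/\Delta_i^2$, degrading the $\Delta_i$-dependence unless $u_i$ is enlarged. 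The paper's finer split is precisely the fix: the event $\ev{\hmu_{t,i}>\mu_i+\Delta_i/3}$ is handled with $\II{I_t=i}$ \emph{retained} and re-indexed by pull count $k$, so that $\hmu_{\tau_k,i}$ is an honest average of $k$ i.i.d.\ samples and each $k$ contributes once; the complementary event that the perturbation alone is large is the part where dropping the indicator and bounding pointwise given $\F_{t-1}$ (where $N_{t,i}$ is measurable and only $Z_{t,i}$ is random) is legitimate. You need both devices, one for each half; with that repair the rest of your argument goes through and recovers the stated bound.
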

Notice that, unlike any other algorithm that we are aware of, Boltzmann--Gumbel exploration still continues to guarantee meaningful 
regret bounds even if the subgaussianity constant $\sigma$ is underestimated---although such misspecification is penalized exponentially in 
the true $\sigma^2$.
A downside of our bound is that it shows a suboptimal dependence on the number of rounds $T$: it grows asymptotically as 
$\sum_{i > 1}{\log^2(T\Delta_i^2)}\big/{\Delta_i}$, in contrast to the standard regret bounds for the UCB algorithm of 
\citet{auer2002finite} that grow as $\sum_{i > 1}({\log T})\big/{\Delta_i}$. However, our guarantee improves on the 
distribution-independent regret bounds of UCB that are of order $\sqrt{KT\log T}$. This is shown in the following corollary.
\begin{corollary}\label{cor:subgauss}
Assume that the rewards of each arm are $\sigma^2$-subgaussian. Then, the regret of Boltzmann--Gumbel exploration with $C = \sigma$ 
satisfies
$
  R_T \le 200 \sigma\sqrt{KT}\log K
$.
\end{corollary}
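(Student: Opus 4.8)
The plan is to derive the distribution-independent bound from the distribution-dependent bound of Theorem~\ref{thm:subgauss} by a standard ``split at a threshold gap'' argument. The idea is that arms with large gap $\Delta_i$ contribute little regret per pull but are cheap to control via the logarithmic-type bound, whereas arms with small gap are pulled more often but each such pull costs little. Concretely, I would fix a threshold $\Delta_0>0$ to be optimized at the end, and write
\[
 R_T = \sum_{i=2}^K \Delta_i \EE{N_{T,i}} = \sum_{i : \Delta_i \le \Delta_0} \Delta_i \EE{N_{T,i}} + \sum_{i : \Delta_i > \Delta_0} \Delta_i \EE{N_{T,i}}.
\]
The first sum is bounded trivially by $\Delta_0 T$, since $\sum_i \EE{N_{T,i}} \le T$. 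For the second sum, I would invoke Theorem~\ref{thm:subgauss} with $C=c=\sigma$, which gives a per-arm regret contribution of order $\frac{\sigma^2 \log^2_+(T\Delta_i^2/\sigma^2)}{\Delta_i}$; since each such arm has $\Delta_i > \Delta_0$, this contribution is at most of order $\frac{\sigma^2 \log^2_+(T\Delta_i^2/\sigma^2)}{\Delta_0}$, and summing over at most $K$ arms yields roughly $\frac{K\sigma^2 \log^2(\cdot)}{\Delta_0}$.

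The main technical nuisance will be the logarithmic factor $\log^2_+(T\Delta_i^2/\sigma^2)$, because $\Delta_i$ itself appears inside the logarithm and can range up to a constant (rewards are effectively bounded after subgaussian tail control). Here I would use monotonicity and the fact that $\Delta_i$ is bounded: since $\Delta_i \le 1$ in the relevant normalized scale (or more generally bounded by an absolute constant determined by $\sigma$), one has $T\Delta_i^2/\sigma^2 \le T/\sigma^2$, so $\log_+(T\Delta_i^2/\sigma^2) \le \log_+(T/\sigma^2) = O(\log T)$. This replaces the arm-dependent log by a uniform $\log T$ factor, giving a second-sum bound of order $\frac{K\sigma^2 \log^2 T}{\Delta_0}$. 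Combining the two pieces,
\[
 R_T \le \Delta_0 T + O\!\pa{\frac{K\sigma^2\log^2 T}{\Delta_0}}.
\]

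The final step is to optimize the free parameter $\Delta_0$ by balancing the two terms. Setting the derivative to zero (or simply equating the two terms up to constants) gives $\Delta_0 \asymp \sigma\sqrt{\frac{K\log^2 T}{T}} = \sigma \frac{\sqrt{K}\log T}{\sqrt{T}}$, which upon substitution produces a bound of order $\sigma\sqrt{KT}\log T$. The sharpening from $\log T$ to $\log K$ claimed in the corollary is where the real care is needed: the naive balancing yields $\log T$, not $\log K$, so the improvement must come from a more refined handling of the logarithmic factor --- most likely by exploiting that the relevant threshold $\Delta_0$ is itself of order $1/\sqrt{T}$, so that $\log_+(T\Delta_0^2/\sigma^2)$ collapses to a constant, and that the number of distinct ``log-scales'' contributing is controlled by $K$ rather than $T$. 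I expect this last refinement --- extracting $\log K$ in place of $\log T$ --- to be the main obstacle, and it is the only place where a genuinely clever estimate (rather than routine optimization) is required; the explicit constant $200$ is then just bookkeeping from tracking the constants in Theorem~\ref{thm:subgauss} through the optimization.
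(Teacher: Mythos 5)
Your decomposition at a threshold gap is indeed the route the paper takes, and the first half of your argument (bounding the small-gap arms by $\Delta_0 T$ and the large-gap arms via Theorem~\ref{thm:subgauss}) matches the paper's proof of Corollary~\ref{cor:subgauss}. However, the step where you bound $\log_+\pa{T\Delta_i^2/\sigma^2}$ uniformly by $O(\log T)$ is precisely the step that forfeits the claimed bound: after balancing you obtain $\sigma\sqrt{KT}\log T$, which is the UCB-type rate the corollary is explicitly improving upon, and you yourself flag that you do not see how to recover $\log K$. So the proof is incomplete, and the one genuinely non-routine step is the missing one.

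The missing idea is to \emph{not} decouple the logarithm from the $1/\Delta_i$ factor. The paper keeps the per-arm expression $\bigl(A + B\log^2_+ (T\Delta_i^2/\sigma^2)\bigr)/\Delta_i$ intact (with $A = e^\gamma + 18\sqrt{e}(1+e^{-\gamma})$ and $B=9$ read off from Theorem~\ref{thm:subgauss} with $C=c=\sigma$) and uses that this expression is monotone decreasing in $\Delta_i$ over the relevant range, so for every arm with $\Delta_i > \Delta$ it is bounded by its value \emph{at the threshold}, giving $\sigma^2 K\bigl(A + B\log^2 (T\Delta^2/\sigma^2)\bigr)/\Delta$ for the large-gap sum. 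Choosing $\Delta = \sigma\sqrt{K/T}\log K$ then makes the argument of the logarithm equal to $T\Delta^2/\sigma^2 = K\log^2 K$, which depends only on $K$; since $2\log\log x \le \log x$, one gets $\log^2(K\log^2 K) \le 4\log^2 K$, and both terms are $O(\sigma\sqrt{KT}\log K)$, with the constant $200$ coming from tracking $A$ and $B$. Your intuition that the logarithm ``collapses'' at the threshold was pointing in the right direction, but it collapses to $\Theta(\log K)$ rather than to a constant, and the collapse must be propagated to \emph{every} large-gap arm via the monotonicity observation --- not handled arm by arm or replaced by a worst-case $\log T$. (A secondary quibble: under subgaussianity alone the gaps $\Delta_i$ need not be bounded by $1$, so even your $\log T$ fallback requires an extra argument as written.)
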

Notably, this bound shows optimal dependence on the number of rounds $T$, but is suboptimal in terms of the number of arms. To complement 
this upper bound, we also show that these bounds are tight in the sense that the $\log K$ factor cannot be removed.
\begin{theorem}\label{thm:wc}
For any $K$ and $T$ such that $\sqrt{K/T}\log K \le 1$, there exists a bandit problem with rewards bounded in $[0,1]$ where the regret of 
Boltzmann--Gumbel exploration with $C = 1$ is at least $R_T \ge \frac 12 \sqrt{KT}\log K$.
\end{theorem}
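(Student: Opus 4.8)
The plan is to exhibit a single hard instance and show that Boltzmann--Gumbel exploration keeps the optimal arm in the minority for essentially the whole horizon. I would take $K$ arms with deterministic rewards, arm $1$ paying $\Delta$ and every other arm paying $0$, with $\Delta=\sqrt{K/T}\,\log K$; the hypothesis $\sqrt{K/T}\log K\le 1$ guarantees $\Delta\le 1$, so the rewards lie in $[0,1]$. After an initial round-robin phase in which each arm is played once, the empirical means equal the true means at every later round, so $\hmu_{t,i}$ equals $\Delta$ for arm $1$ and $0$ otherwise. Writing $N_{t-1,1}=n_1$ and $S_{t-1}=\sum_{i\ge 2}N_{t-1,i}=t-1-n_1$, the pseudo-regret is $R_T=\Delta\,\EE{T-N_{T,1}}$, so it suffices to show that arm $1$ is played at most about $T/2$ times, i.e. $\EE{N_{T,1}}\le T/2+o(T)$.

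The first step is an exact per-round formula. Let $\F_{t-1}$ denote the history through round $t-1$. Conditioning on $\F_{t-1}$ and on the perturbation $Z_{t-1,1}$ of arm $1$, and abbreviating its perturbed score by $v=\Delta+Z_{t-1,1}/\sqrt{n_1}$, the Gumbel CDF $F(x)=\exp(-e^{-x+\gamma})$ gives $\PP{\beta_{t-1,i}Z_{t-1,i}<v}=\exp\pa{-e^{\gamma}e^{-v\sqrt{N_{t-1,i}}}}$ for each suboptimal arm, hence
\[
\PPcc{I_t=1}{\F_{t-1},\,Z_{t-1,1}}=\exp\pa{-e^{\gamma}\sum_{i\ge 2}e^{-v\sqrt{N_{t-1,i}}}}.
\]
Since $x\mapsto e^{-v\sqrt{x}}$ is convex for $v>0$, Jensen's inequality lower-bounds the sum by its value at equal counts, $\sum_{i\ge 2}e^{-v\sqrt{N_{t-1,i}}}\ge (K-1)e^{-v\sqrt{S_{t-1}/(K-1)}}$, which collapses the entire random, history-dependent configuration of suboptimal counts to the single scalar $S_{t-1}\le t-1\le T$. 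This device is exactly where $\log K$ enters: the typical size of $\max_{i\ge 2}\beta_{t-1,i}Z_{t-1,i}$ is of order $\sqrt{(K-1)/S_{t-1}}\,\log K$, which at $S_{t-1}\approx T$ matches $\Delta$.

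The key lemma is a drift statement: once arm $1$ has been played enough, it is selected with probability at most $\tfrac12$. Concretely I would produce a threshold $n^{\dagger}$ of order $T/K$ such that $N_{t-1,1}\ge n^{\dagger}$ forces $\PPcc{I_t=1}{\F_{t-1}}\le\tfrac12$ uniformly over admissible histories. Combining the display with the convexity bound and $S_{t-1}\le T$ gives $\PPcc{I_t=1}{\F_{t-1}}\le \EE{\exp\pa{-e^{\gamma}(K-1)e^{-v\sqrt{T/(K-1)}}}}$, and the only remaining randomness is $Z_{t-1,1}$. At the critical scale $n_1\approx T/(K-1)$ the exponent becomes a multiple of $e^{-Z_{t-1,1}}$, and since $e^{-Z}$ is exponentially distributed under the stated CDF this expectation is a clean Laplace transform, $\frac{1}{1+(K-1)e^{-\Delta\sqrt{T/(K-1)}}}$, which with $\Delta=\sqrt{K/T}\log K$ is $\tfrac12$ up to lower-order terms; taking $n^{\dagger}$ a suitable constant multiple of $T/K$ pushes it below $\tfrac12$, and raising $n_1$ only shrinks $Z_{t-1,1}/\sqrt{n_1}$, so the bound persists for all $n_1\ge n^{\dagger}$ (the event $Z_{t-1,1}<0$, where arm $1$ almost never wins, is treated separately).

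Finally I would amortize. Plays of arm $1$ made while $N_{t-1,1}<n^{\dagger}$ number at most $n^{\dagger}=O(T/K)$, while every remaining play occurs in a round where the drift lemma applies, so $\EE{N_{T,1}}\le n^{\dagger}+\tfrac12\sum_{t}\PP{N_{t-1,1}\ge n^{\dagger}}\le T/2+O(T/K)$; substituting into $R_T=\Delta\,\EE{T-N_{T,1}}$ yields $R_T\ge\tfrac12\sqrt{KT}\log K\,\pa{1-O(1/K)}$. I expect the main obstacle to be the drift lemma itself: one must control $\PPcc{I_t=1}{\F_{t-1}}$ uniformly over the adversarially correlated suboptimal counts (handled by the convexity reduction) while simultaneously integrating out arm $1$'s own Gumbel perturbation at small counts, where that perturbation is large and can momentarily make arm $1$ likely. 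The resolution is that such configurations are transient and self-correcting, so they cost only the $O(T/K)$ amortization term and, thanks to the genuine slack (arm $1$ is in fact played $o(T)$ times), do not affect the leading constant $\tfrac12$.
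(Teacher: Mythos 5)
Your construction and the two load-bearing technical ideas are exactly the paper's: deterministic rewards with $\Delta=\sqrt{K/T}\log K$, the product form of $\PPcc{I_t=1}{\F_{t-1},Z_{t-1,1}}$ coming from the Gumbel CDF, and Jensen's inequality applied to the convex, decreasing map $x\mapsto e^{-v\sqrt{x}}$ to collapse the suboptimal counts to their sum bounded by $T$ --- this last step is precisely where the paper also extracts the $\log K$. Where you differ is the bookkeeping. The paper fixes a round $t$, splits on whether the total suboptimal count has already exceeded a threshold (in which case the regret is already large), and otherwise lower-bounds $\PP{I_t\neq 1}$ by a constant using the crude bound $\PP{Z_{t,1}<0}\ge 0.1$ together with the probability that some suboptimal perturbation exceeds $\Delta$; you instead integrate out arm~$1$'s own perturbation exactly (via the observation that $e^{\gamma-Z}$ is standard exponential, yielding the Laplace transform $1/(1+(K-1)e^{-\Delta\sqrt{T/(K-1)}})$) and run a drift-plus-amortization argument on $N_{T,1}$. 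Your per-round computation is actually sharper than the paper's, and the amortization $\EE{N_{T,1}}\le n^\dagger+\tfrac12 T$ is clean.

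Two points remain to be nailed down, and the second is the one that matters. First, the assertion that the drift bound ``persists for all $n_1\ge n^\dagger$'' because increasing $n_1$ shrinks $Z_{t-1,1}/\sqrt{n_1}$ is not automatic: shrinking the perturbation removes the favourable event $\ev{Z_{t-1,1}<0}$ as well as the unfavourable upper tail, so you need either a monotonicity argument for $c\mapsto\EE{\exp\pa{-a e^{-cZ}}}$ or a separate treatment of the two tails (your parenthetical suggests you see this). Second, and more substantively, your amortization yields $R_T\ge\tfrac12\Delta T\pa{1-2n^\dagger/T}=\tfrac12\sqrt{KT}\log K\pa{1-O(1/K)}$, which falls strictly short of the stated constant $\tfrac12$; for $K=2$ or $3$ the deficit is a constant fraction, since $n^\dagger=\Theta(T/K)$ is then $\Theta(T)$. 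To recover the theorem as stated you must show that the per-round selection probability is at most $\tfrac12-n^\dagger/T$ once $N_{t-1,1}\ge n^\dagger$, trading off the choice of $n^\dagger$ against the amortization loss. Your own computation supports this --- as $n_1\to\infty$ the bound tends to $\exp\pa{-e^\gamma(K-1)e^{-\Delta\sqrt{T/(K-1)}}}\approx e^{-e^\gamma}\approx 0.17$, so there is genuine room below $\tfrac12$ --- but the claim that the slack ``does not affect the leading constant'' is exactly the step you have not carried out, and it is not uniform in $K$ without an explicit calculation.
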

The proofs can be found in the Appendices~\ref{sec:cor} and~\ref{sec:wc}.
Note that more sophisticated policies are known to have better distribution-free bounds. The algorithm MOSS~\cite{audibert09minimax} 
achieves minimax-optimal $\sqrt{KT}$ distribution-free bounds, but distribution-dependent bounds of the form $(K/\Delta)\log(T\Delta^2)$ 
where $\Delta$ is the suboptimality gap. A variant of UCB using action elimination and due to \citet{auer10ucb2} has regret $\sum_{i > 
1}{\log(T\Delta_i^2)}\big/{\Delta_i}$ corresponding to a $\sqrt{KT(\log K)}$ distribution-free bound. The same bounds are achieved by the 
Gaussian Thompson sampling algorithm of \citet{AG13}, given that the rewards are subgaussian.

We finally provide a simple variant of our algorithm that allows to handle heavy-tailed rewards, intended here as reward distributions that 
are not subgaussian. We propose to use technique due to \citet{Cat12} based on the \emph{influence function}
\[
 \psi(x) = 
 \begin{cases}
 \log\pa{1+x + x^2/2}, &{\mbox{for $x\ge 0$}},\\
 -\log\pa{1-x + x^2/2}, &{\mbox{for $x\le 0$}}.
 \end{cases}
\]
Using this function, we define our estimates as
\[
 \hmu_{t,i} = \beta_{t,i} \sum_{s=1}^{t} \II{I_s = i} \psi\pa{\frac{X_{s,i}}{\beta_{t,i} N_{t,i}}}
\]

We prove the following result regarding Boltzmann--Gumbel exploration run with the above estimates.
\begin{theorem}\label{thm:heavy}
 Assume that the second moment of the rewards of each arm are bounded uniformly as $\EE{X_i^2} \le V$ and let $c>0$ be arbitrary. 
Then, the regret of Boltzmann--Gumbel exploration satisfies
 \[
  R_T \le 
  \sum_{i=2}^K \frac{9C^2 \log^2_+ \pa{T\Delta_i/c^2}}{\Delta_i}
  +
  \sum_{i=2}^K \frac{c^2 e^\gamma + 18C^2e^{V/2C^2} \pa{1 + e^{-\gamma}}}{\Delta_i} + \sum_{i=2}^K \Delta_i.
 \]
\end{theorem}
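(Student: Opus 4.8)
The plan is to reduce Theorem~\ref{thm:heavy} to the analysis already carried out for Theorem~\ref{thm:subgauss}, observing that the subgaussian assumption enters that proof only through a single exponential-moment bound on the estimation error. The Gumbel--softmax argument that controls $\EE{N_{T,i}}$ is entirely distribution-free once one knows that, conditioned on $N_{t,i}=n$, the estimator obeys
\[
 \EEc{e^{\pm(\sqrt{n}/C)\pa{\hmu_{t,i}-\mu_i}}}{N_{t,i}=n} \le e^{\sigma^2/2C^2},
\]
which for the plain empirical mean follows by applying subgaussianity at $y = 1/(C\sqrt{n})$ and factorizing over the $n$ independent samples. Since the $e^{\sigma^2/2C^2}$ factor in the bound of Theorem~\ref{thm:subgauss} is exactly the output of this step, the whole content of the present theorem is to re-establish the same inequality for the robust estimator with $V$ in place of $\sigma^2$; everything downstream then transfers verbatim.

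To establish the exponential-moment bound for the Catoni-type estimator, I would start from the two-sided logarithmic envelope of the influence function,
\[
 -\log\pa{1 - x + x^2/2} \le \psi(x) \le \log\pa{1 + x + x^2/2},
\]
valid for all $x$ (the two branches of $\psi$ realize the matching bound on their respective sides, and $\psi$ is odd). Exponentiating gives $e^{\psi(x)} \le 1 + x + x^2/2$ and $e^{-\psi(x)} \le 1 - x + x^2/2$. Writing $\beta_{t,i} = C/\sqrt{n}$ and $Y_s = X_{s,i}/(C\sqrt{n})$, the estimator reads $\hmu_{t,i} = (C/\sqrt{n})\sum_{s:\,I_s=i}\psi(Y_s)$, so $(\sqrt{n}/C)\,\hmu_{t,i} = \sum_s \psi(Y_s)$. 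Evaluating the exponential moment at $\theta = \sqrt{n}/C$ and using independence,
\[
 \EE{e^{\theta\hmu_{t,i}}} = \prod_{s}\EE{e^{\psi(Y_s)}} \le \prod_{s}\pa{1 + \EE{Y_s} + \tfrac12\EE{Y_s^2}}.
\]
Since $\EE{Y_s} = \mu_i/(C\sqrt{n})$ and $\EE{Y_s^2} = \EE{X_{i}^2}/(C^2 n) \le V/(C^2 n)$, applying $1+u \le e^u$ to each of the $n$ factors yields $\EE{e^{\theta(\hmu_{t,i}-\mu_i)}} \le e^{V/2C^2}$; the lower-tail version is identical, using $e^{-\psi(x)} \le 1-x+x^2/2$.

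I would then substitute this into the regret computation of Theorem~\ref{thm:subgauss}: wherever that proof invokes the exponential-moment inequality with constant $e^{\sigma^2/2C^2}$, the identical inequality now holds with $e^{V/2C^2}$, and no other property of the rewards is used. This reproduces the claimed bound with the $18C^2 e^{\sigma^2/2C^2}$ term replaced by $18C^2 e^{V/2C^2}$ and all remaining terms unchanged.

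The main obstacle I anticipate is not the concentration computation itself---which is short once the envelope is in hand---but the bookkeeping around the \emph{random} count $N_{t,i}$: the scaling $\beta_{t,i}$ depends on the data-dependent $n=N_{t,i}$, so one cannot literally condition and factorize as above without justifying that the per-sample independence structure survives. This is exactly the point where the subgaussian proof must already do work (a union over the values of $n$, or a peeling/stopping-time argument), and I would reuse that device unchanged. The genuinely new verification is that the \emph{uncentered} second-moment bound $\EE{X_i^2}\le V$ is precisely what the envelope inequality consumes---which holds because the Catoni estimator is built from the raw rewards $X_{s,i}$ rather than centered increments.
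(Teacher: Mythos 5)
Your proposal is correct and follows essentially the same route as the paper: reduce to the proof of Theorem~\ref{thm:subgauss} and observe that the only distribution-dependent ingredient is the exponential-moment bound $\EEcc{\exp\bigl(\pm(\hmu_{t,i}-\mu_i)/\beta_{t,i}\bigr)}{N_{t,i}=n}\le e^{V/2C^2}$, which the paper obtains by citing Proposition~2.1 of Catoni and you re-derive directly from the envelope $e^{\psi(x)}\le 1+x+x^2/2$ (your computation is correct, and the random-count issue you flag is handled exactly as in Lemma~\ref{lem:SG_conc}, by indexing on the $k$'th draw so that $N_{\tau_k,i}=k$).
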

Notably, this bound coincides with that of Theorem~\ref{thm:subgauss}, except that $\sigma^2$ is replaced by $V$. Thus, by following the 
proof of Corollary~\ref{cor:subgauss}, we can show a distribution-independent regret bound of order $\sqrt{KT}\log K$.

\section{Analysis}
Let us now present the proofs of our main results concerning Boltzmann--Gumbel exploration, Theorems~\ref{thm:subgauss} 
and~\ref{thm:heavy}. Our analysis builds on several ideas 
from \citet{AG13}. We first provide generic tools that are independent of the reward estimator and then move on to providing specifics for 
both 
estimators.

We start with introducing some notation. We define
$
 \tmu_{t,i} = \hmu_{t,i} + \beta_{t,i} Z_{t,i}
$,
so that the algorithm can be simply written as $I_t = \argmax_i \tmu_{t,i}$. Let $\F_{t-1}$ be the sigma-algebra generated by the 
actions taken by the learner and the realized rewards up to the beginning of round $t$. Let us fix thresholds  
$x_{i},y_{i}$ satisfying $\mu_i \le x_i \le y_i \le \mu_1$ and define $q_{t,i} = \PPcc{\tmu_{t,1} > y_{i}}{\F_{t-1}}$. 
Furthermore, we define the events $E^{\hmu}_{t,i} = \ev{\hmu_{t,i} \le x_{i}}$ and $E^{\tmu}_{t,i} = \ev{\tmu_{t,i} \le y_{i}}$.
With this notation at hand, we can decompose the number of draws of any suboptimal $i$ as follows:
\begin{equation}\label{eq:decomp}
 \begin{split}
 \EE{N_{T,i}} =& \sum_{t=1}^T \PP{I_t = i, \Etti, \Ehti} + \sum_{t=1}^T \PP{I_t = i, \overline{\Etti}, \Ehti} + \sum_{t=1}^T \PP{I_t = i, 
\overline{\Ehti}}.
 \end{split}
\end{equation}
It remains to choose the thresholds $x_{i}$ and $y_{i}$ in a meaningful way: we pick
$x_{i} = \mu_i + \frac{\Delta_i}{3}$ and $y_{i} = \mu_{1} - \frac{\Delta_i}{3}$.
The rest of the proof is devoted to bounding each term in Eq.~\eqref{eq:decomp}. Intuitively, the individual terms capture the following 
events:
\begin{itemize}
 \item The first term counts the number of times that, even though the estimated mean reward of arm $i$ is well-concentrated and the 
additional perturbation $Z_{t.i}$ is not too large, arm $i$ was drawn instead of the optimal arm $1$. This happens when the optimal arm is 
poorly estimated or when the perturbation $Z_{t,1}$ is not large enough. Intuitively, this term measures the interaction between the 
perturbations $Z_{t,1}$ and the random fluctuations of the reward estimate $\hmu_{t,1}$ around its true mean, and will be small if the 
perturbations tend to be large enough and the tail of the reward estimates is light enough.
 \item The second term counts the number of times that the mean reward of arm $i$ is well-estimated, but it ends up being drawn due to a 
large perturbation. This term can be bounded independently of the properties of the mean estimator and is small when the tail 
of the perturbation distribution is not too heavy.
 \item The last term counts the number of times that the reward estimate of arm $i$ is poorly concentrated. This term is independent of the 
perturbations and only depends on the properties of the reward estimator.
\end{itemize}
As we will see, the first and the last terms can be bounded in terms of the \emph{moment generating function} of the reward estimates, 
which makes subgaussian reward estimators particularly easy to treat. We begin by the most standard part of our analysis: bounding the 
third term on the right-hand-side of~\eqref{eq:decomp} in terms of the moment-generating function.
\begin{lemma}\label{lem:SG_conc}
Let us fix any $i$ and define $\tau_k$ as the $k$'th time that arm $i$ was drawn. We have
 \[
\sum_{t=1}^T \PP{I_t = i, \overline{\Ehti}} \le 1 + \sum_{k=1}^{T-1}\EE{\exp\pa{\frac{\hmu_{\tau_k,i} - \mu_i}{\beta_{\tau_k,i}}}} \cdot 
e^{-\frac{\Delta_i \sqrt{k}}{3C}}.
 \]
\end{lemma}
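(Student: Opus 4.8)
The plan is to bound the number of rounds on which arm $i$ is drawn while carrying an inflated empirical mean, and the crucial observation is that this count is most naturally organized by the number $k$ of times arm $i$ has been sampled rather than by the round index $t$. Recall that $\overline{\Ehti}$ is the event $\ev{\hmu_{t,i} > x_i}$ with $x_i = \mu_i + \Delta_i/3$, and that on the $k$-th draw of arm $i$ the scale $\beta_{\tau_k,i} = C/\sqrt{k}$ is deterministic. Since $\hmu_{t,i}$ changes only on rounds where arm $i$ is drawn, I would first reindex the time sum as a sum over draws,
\[
 \sum_{t=1}^T \II{I_t = i, \overline{\Ehti}} = \sum_{k=1}^{N_{T,i}} \II{\hmu_{\tau_k,i} > x_i},
\]
so that every surviving indicator refers to the estimate built from exactly $k$ samples.

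The step that deserves the most care is disposing of the random, data-dependent draw times $\tau_k$. Because the rewards $X_{s,i}$ are i.i.d.\ from $\nu_i$ and independent of the learner's decisions, the empirical mean $\hmu_{\tau_k,i}$ produced by the $k$-th draw has exactly the law of the average $\hmu_i^{(k)}$ of $k$ fixed i.i.d.\ draws from $\nu_i$, regardless of when those draws occurred. Passing to this surrogate sequence serves two purposes at once: it makes each summand's distribution a function of $k$ alone, and, since $\hmu_i^{(k)}$ is well defined for every $k$ whether or not arm $i$ was actually drawn that many times, it lets me replace the random upper limit $N_{T,i}$ by a deterministic range. Here the bookkeeping must be done \emph{at the indicator level}: because a single top term is only guaranteed to be at most $1$ (its tail probability need not be small), I would peel off one term by the trivial bound $\II{\cdot}\le 1$ and then enlarge the remaining nonnegative sum from $k \le N_{T,i}-1$ up to $k \le T-1$, yielding
\[
 \sum_{k=1}^{N_{T,i}} \II{\hmu_{\tau_k,i} > x_i} \le 1 + \sum_{k=1}^{T-1} \II{\hmu_i^{(k)} > x_i}.
\]
Taking expectations turns this into $1 + \sum_{k=1}^{T-1}\PP{\hmu_i^{(k)} > x_i}$.

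Finally I would control each tail by a one-sided Chernoff--Markov argument, deliberately keeping the moment generating function intact instead of invoking any tail assumption, since this lemma must later feed both the subgaussian analysis and the Catoni-estimator analysis. Measuring the deviation in units of $\beta_{\tau_k,i}$ and using $x_i - \mu_i = \Delta_i/3$,
\[
 \PP{\hmu_i^{(k)} > x_i} = \PP{e^{(\hmu_i^{(k)} - \mu_i)/\beta_{\tau_k,i}} > e^{\Delta_i/(3\beta_{\tau_k,i})}} \le e^{-\Delta_i/(3\beta_{\tau_k,i})}\,\EE{e^{(\hmu_i^{(k)} - \mu_i)/\beta_{\tau_k,i}}},
\]
and substituting $\beta_{\tau_k,i} = C/\sqrt{k}$ turns the prefactor into exactly $e^{-\Delta_i\sqrt{k}/(3C)}$. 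Summing over $k$ and adding back the peeled term reproduces the claimed inequality. I expect the stopping-time-to-i.i.d.\ reduction of the second paragraph to be the only genuinely delicate point; the reindexing is elementary and the Markov step is routine, and stopping short of bounding the moment generating function is exactly what makes this lemma reusable in the proofs of Theorems~\ref{thm:subgauss} and~\ref{thm:heavy}.
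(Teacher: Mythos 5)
Your proof is correct and follows essentially the same route as the paper's: reindex the time sum by the number of draws of arm $i$, peel off one term to get a deterministic range of length $T-1$ (the paper drops the $k=0$ term corresponding to the estimate before the first draw, you drop the top term), and apply Markov's inequality at scale $1/\beta_{\tau_k,i}=\sqrt{k}/C$ while deliberately keeping the moment generating function intact. The only substantive addition is that you make explicit the reduction from the stopped estimate $\hmu_{\tau_k,i}$ to an average of $k$ i.i.d.\ samples, which the paper uses implicitly.
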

Interestingly, our next key result shows that the first term can be bounded by a nearly identical expression:
\begin{lemma}\label{lem:invp}
 Let us define $\tau_k$ as the $k$'th time that arm $1$ was drawn. For any $i$, we have
 \[
  \sum_{t=1}^T \PP{I_t = i, \Etti, \Ehti} \le \sum_{k=0}^{T-1} \EE{\exp\pa{\frac{\mu_1 - \hmu_{\tau_k,1}}{\beta_{\tau_k,1}}}} 
e^{-\gamma-\frac{\Delta_i\sqrt{k}}{3C}}.
 \]
\end{lemma}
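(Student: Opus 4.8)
The plan is to reduce the round-by-round sum into a sum over the pulls of the optimal arm, and then to bound each resulting term by a pointwise estimate coming from the Gumbel tail. Throughout I write $F$ for the standard Gumbel CDF, so that $q_{t,i} = \PPcc{\tmu_{t,1} > y_i}{\F_{t-1}} = 1 - F\pa{(y_i - \hmu_{t,1})/\beta_{t,1}}$, and I will simply drop the indicator of $\Ehti$ by bounding it by $1$ (which is why no $\Eh$-term survives on the right-hand side). The first and central step is to establish the comparison
\[
 \PPcc{I_t = i, \Etti}{\F_{t-1}} \le \frac{1 - q_{t,i}}{q_{t,i}}\,\PPcc{I_t = 1}{\F_{t-1}}.
\]
To prove it I would condition on $\F_{t-1}$ \emph{and} on all perturbations $Z_{t,j}$ with $j\neq 1$, which fixes $W = \max_{j\neq 1}\tmu_{t,j}$ and the suboptimal arm attaining it; the only remaining randomness is then $Z_{t,1}$. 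On the event $\ev{I_t = i, \Etti}$ arm $i$ is the overall maximizer and $\tmu_{t,i}\le y_i$, hence $\tmu_{t,1}\le \tmu_{t,i} = W \le y_i$, giving $\PP{I_t = i, \Etti \mid \cdots} \le \II{W\le y_i}\,\PP{\tmu_{t,1}\le W}$, whereas $\PP{I_t = 1 \mid \cdots} = \PP{\tmu_{t,1} > W}$. Since $w\mapsto \PP{\tmu_{t,1}\le w}/\PP{\tmu_{t,1}>w}$ is nondecreasing and $W\le y_i$ on the relevant event, the ratio of these two conditional probabilities is at most $\pa{1-q_{t,i}}/q_{t,i}$, which is $\F_{t-1}$-measurable; averaging out the remaining perturbations gives the display.

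The second step converts the round-sum into a pull-sum. Because $\pa{1-q_{t,i}}/q_{t,i}$ is $\F_{t-1}$-measurable, the martingale identity $\EE{\frac{1-q_{t,i}}{q_{t,i}}\PPcc{I_t=1}{\F_{t-1}}} = \EE{\frac{1-q_{t,i}}{q_{t,i}}\II{I_t=1}}$ holds. Now $q_{t,i}$ depends only on $\hmu_{t,1}$ and $\beta_{t,1} = C/\sqrt{N_{t,1}}$, so it stays constant between consecutive draws of arm $1$; summing against $\II{I_t=1}$ therefore collapses the total to $\sum_{k} \frac{1-q^{(k)}}{q^{(k)}}$, where $q^{(k)} = 1 - F(a_k)$ is the common value during the epoch in which arm $1$ has been drawn exactly $k$ times and $a_k = (y_i - \hmu_{\tau_k,1})/\beta_{\tau_k,1}$. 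Extending the range to $k=0,\dots,T-1$ only adds nonnegative terms.

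The final step is a pointwise Gumbel estimate. Writing $u = e^{\gamma - a_k}$ so that $F(a_k) = e^{-u}$, the bound $\frac{F(a_k)}{1-F(a_k)}\le e^{a_k - \gamma}$ is equivalent to $(1+u)e^{-u}\le 1$, which holds for every $u\ge 0$ since the left-hand side equals $1$ at $u=0$ and has derivative $-u e^{-u}\le 0$. Using $y_i = \mu_1 - \Delta_i/3$ and $1/\beta_{\tau_k,1} = \sqrt{k}/C$, the factor $e^{a_k - \gamma}$ is \emph{exactly} $\exp\!\pa{\frac{\mu_1 - \hmu_{\tau_k,1}}{\beta_{\tau_k,1}}}e^{-\gamma - \Delta_i\sqrt{k}/(3C)}$; taking expectations and summing over $k$ yields the claim.

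I expect the comparison inequality of the first step to be the main obstacle, as it requires the careful conditioning on the competing perturbations together with the monotone-likelihood-ratio argument that introduces the factor $\pa{1-q_{t,i}}/q_{t,i}$. The combinatorial collapse to a sum over the pulls of arm $1$ and the elementary inequality $(1+u)e^{-u}\le 1$ are then routine.
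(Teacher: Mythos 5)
Your proof is correct and follows essentially the same route as the paper: the comparison inequality $\PPcc{I_t=i,\Etti}{\F_{t-1}}\le\frac{1-q_{t,i}}{q_{t,i}}\PPcc{I_t=1}{\F_{t-1}}$ (the paper's Lemma~\ref{lem:ivs1}, after AG13), the collapse to a sum over the pulls of arm~1, and the elementary bound on the Gumbel odds ratio. The only cosmetic difference is that you prove the comparison step by conditioning on the competing perturbations and using monotonicity of $w\mapsto\PP{\tmu_{t,1}\le w}/\PP{\tmu_{t,1}>w}$, whereas the paper conditions on $\Etti$ and lower-bounds $\PPc{I_t=1}{\cdot}$ by the probability that $\tmu_{t,1}>y_i$ while all other $\tmu_{t,j}\le y_i$; both are valid.
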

It remains to bound the second term in Equation~\eqref{eq:decomp}, which we do in the following lemma:
\begin{lemma}\label{lem:pert}
For any $i\neq 1$ and any constant $c > 0$, we have
 \[
\sum_{t=1}^T \PP{I_t = i, \overline{\Etti}, \Ehti} \le \frac{9C^2 \log^2_+\pa{T\Delta_i^2/c^2} + c^2 e^\gamma}{\Delta_i^2}.
 \]
\end{lemma}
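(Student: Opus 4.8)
The plan is to reduce the event in the summand to a tail event for the Gumbel perturbation $Z_{t,i}$ alone, and then to control the resulting count by splitting the draws of arm $i$ into a ``small count'' and a ``large count'' regime. First I would unpack the two events defining the summand: on $\overline{\Etti}\cap\Ehti$ we have $\tmu_{t,i} > y_i$ together with $\hmu_{t,i}\le x_i$, so that $\beta_{t,i}Z_{t,i} = \tmu_{t,i}-\hmu_{t,i} > y_i-x_i = \Delta_i/3$ by the choice $x_i=\mu_i+\Delta_i/3$, $y_i=\mu_1-\Delta_i/3$. Since $\beta_{t,i}=C/\sqrt{N_{t,i}}$, this says precisely $Z_{t,i} > \frac{\Delta_i\sqrt{N_{t,i}}}{3C}$, so each summand is at most $\II{I_t=i}\,\II{Z_{t,i} > \frac{\Delta_i\sqrt{N_{t,i}}}{3C}}$.

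Next I would reindex by the draws of arm $i$: writing $\tau_k$ for the round of the $k$-th draw, the factor $\II{I_t=i}$ collapses the time sum to $\sum_{k=1}^{N_{T,i}} \II{Z_{\tau_k,i} > \frac{\Delta_i\sqrt{k}}{3C}}$, matching the threshold appearing in the preceding lemmas. I then split at an integer $m$ to be optimized. The contribution from $k\le m$ is at most $m$, since exactly one draw occurs at each value of the count. For $k>m$ I would lower the count-dependent threshold to the fixed value $z_m=\frac{\Delta_i\sqrt m}{3C}$, so that this part is at most $\sum_{k=m+1}^{N_{T,i}}\II{Z_{\tau_k,i} > z_m}\le \sum_{t=1}^T \II{Z_{t,i}>z_m}$, the last inequality using that the draw rounds form a subset of all rounds. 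Taking expectations and using independence of the Gumbel variables gives $T\,\PP{Z>z_m}\le Te^\gamma e^{-\Delta_i\sqrt m/(3C)}$, where the tail estimate follows from $\PP{Z>z}=1-\exp(-e^{-z+\gamma})\le e^{\gamma-z}$.

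Finally I would combine the two parts into $m + Te^\gamma e^{-\Delta_i\sqrt m/(3C)}$ and optimize. Setting $\frac{\Delta_i\sqrt m}{3C}=\log_+(T\Delta_i^2/c^2)$, that is, $m=\frac{9C^2\log_+^2(T\Delta_i^2/c^2)}{\Delta_i^2}$, turns the second term into $\frac{c^2 e^\gamma}{\Delta_i^2}$ and reproduces exactly the claimed bound; the degenerate case $T\Delta_i^2<c^2$, where $\log_+$ vanishes and $m=0$, is handled directly since then $Te^\gamma < \frac{c^2 e^\gamma}{\Delta_i^2}$.

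I expect the main subtlety to be the correlation between $\{I_t=i\}$ and $Z_{t,i}$: conditioning on arm $i$ being selected biases its perturbation upward, so one cannot replace $\PP{I_t=i,\,Z_{t,i}>\cdots}$ by an unconditional Gumbel tail. The split is exactly what circumvents this. Retaining $\{I_t=i\}$ only in the small-count regime lets the pure counting argument absorb those rounds with no probabilistic estimate at all, while in the large-count regime the threshold $z_m$ no longer depends on $k$, so bounding the subset sum $\sum_{k>m}\II{Z_{\tau_k,i}>z_m}$ by the full sum $\sum_t \II{Z_{t,i}>z_m}$ decouples the indicator from the selection event and makes the independence of the perturbations directly usable.
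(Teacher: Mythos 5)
Your proposal is correct and follows essentially the same route as the paper's proof: both split the draws of arm $i$ at the threshold $L=m=\frac{9C^2\log_+^2(T\Delta_i^2/c^2)}{\Delta_i^2}$, absorb the small-count regime by pure counting, and control the large-count regime via the Gumbel tail bound $1-\exp(-e^{-z+\gamma})\le e^{\gamma-z}$ at the fixed threshold $\frac{\Delta_i\sqrt{L}}{3C}$, summed over at most $T$ rounds. The only cosmetic difference is that you reindex by the pull count $k$ before re-embedding into the full time sum, whereas the paper keeps the sum over $t$ with the condition $N_{t,i}>L$ and conditions on $\F_{t-1}$; your explicit remark about decoupling $\{I_t=i\}$ from $Z_{t,i}$ is exactly what the paper does implicitly by dropping that indicator.
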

The proofs of these three lemmas are included in the supplementary material.

\subsection{The proof of Theorem~\ref{thm:subgauss}}\label{sec:subgauss_proof}
For this section, we assume that the rewards are $\sigma$-subgaussian and that $\hmu_{t,i}$ is the empirical-mean estimator.
Building on the results of the previous section, observe that we are left with bounding the terms appearing in Lemmas~\ref{lem:SG_conc} 
and~\ref{lem:invp}. To this end, let us fix a $k$ and an $i$ and notice that by the subgaussianity assumption on the rewards, the 
empirical mean $\tmu_{\tau_k,i}$ is $\frac{\sigma}{\sqrt{k}}$-subgaussian (as $N_{\tau_k,i} = k$). In other words,
\[
 \EE{e^{\alpha \pa{\hmu_{\tau_k,i} - \mu_i}}} \le e^{\alpha^2\sigma^2/2k}
\]
holds for any $\alpha$. In particular, using this above formula for $\alpha = 1/\beta_{\tau_k,i} =\sqrt{\frac{k}{C^2}}$, we obtain
\[
 \EE{\exp\pa{\frac{\hmu_{\tau_k,i} - \mu_i}{\beta_{\tau_k,i}}}} \le e^{\sigma^2/2C^2}.
\]
Thus, the sum appearing in Lemma~\ref{lem:SG_conc} can be bounded as
\begin{align*}
 \sum_{k=1}^{T-1}\EE{\exp\pa{\frac{\hmu_{\tau_k,i} - \mu_i}{\beta_{\tau_k,i}}}} \cdot e^{-\frac{\Delta_i \sqrt{k}}{3C}}
 &\le e^{\sigma^2/2C^2} \sum_{k=1}^{T-1} e^{-\frac{\Delta_i \sqrt{k}}{3C}} \le \frac{18 C^2 e^{\sigma^2/2C^2}}{\Delta_i^2},
\end{align*}
where the last step follows from the fact\footnote{This can be easily seen by bounding the sum with an integral.} that
$ \sum_{k=0}^\infty e^{c\sqrt{k}} \le \frac{2}{c^2}$ holds for all $c>0$. The statement of Theorem~\ref{thm:subgauss} now follows from
applying the same argument to the bound of Lemma~\ref{lem:invp}, using Lemma~\ref{lem:pert}, and the standard expression for the regret in 
Equation~\eqref{eq:regret}. \qed

\subsection{The proof of Theorem~\ref{thm:heavy}}
We now drop the subgaussian assumption on the rewards and consider reward distributions that are possibly heavy-tailed, but have 
bounded variance.  The proof of Theorem~\ref{thm:heavy} trivially follows from the arguments in the previous subsection and using 
Proposition~2.1 of \citet{Cat12} (with $\theta = 0$) that guarantees the bound
\begin{equation}\label{eq:catoni}
 \EEcc{\exp\pa{\pm \frac{\mu_i - \hmu_{t,i}}{\beta_{t,i}}}}{N_{t,i} = n} \le \exp\pa{\frac{\EE{X_i^2}}{2C^2}}.
\end{equation}
\qed

\section{Experiments}
\begin{figure}
 \includegraphics[width=.5\textwidth]{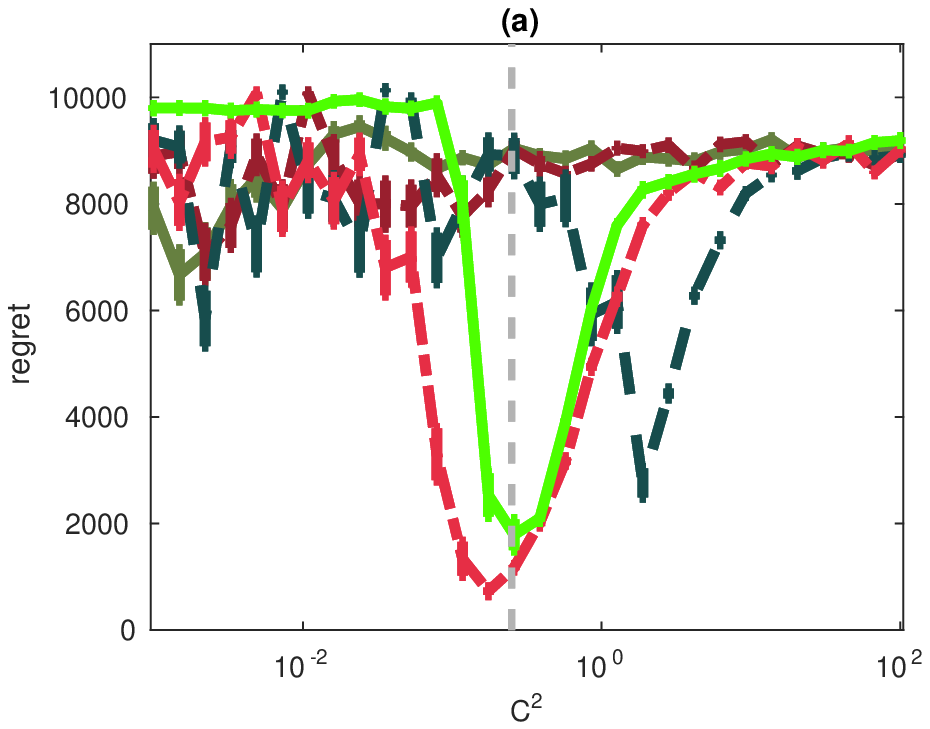}
 \includegraphics[width=.5\textwidth]{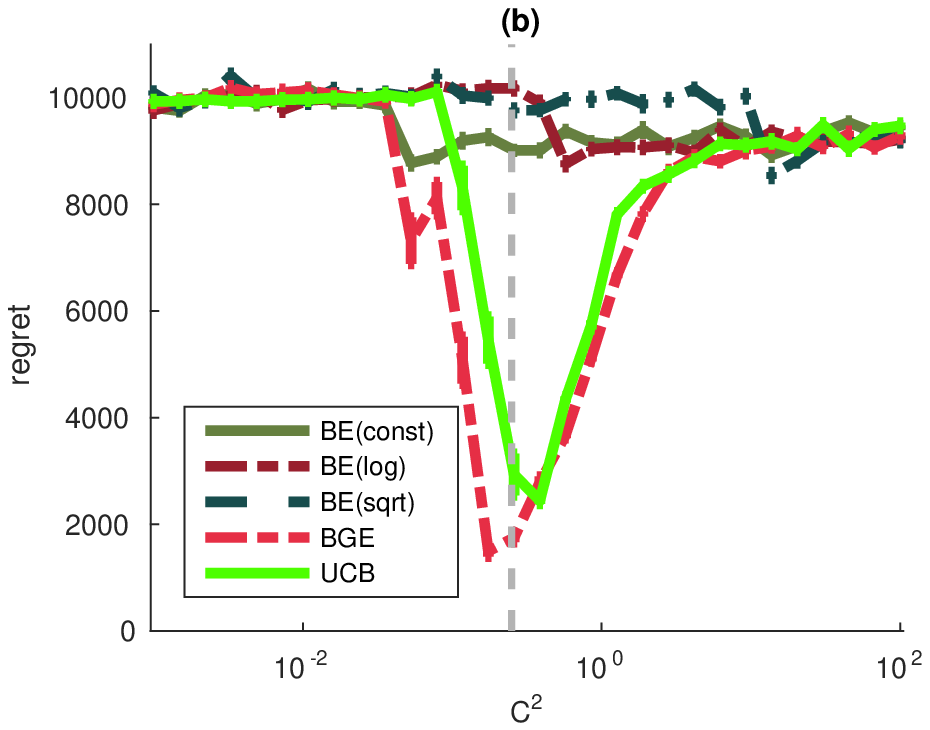}
 \caption{Empirical performance of Boltzmann exploration variants, Boltzmann--Gumbel exploration and UCB for (a) 
i.i.d.~initialization and (b) malicious initialization, as a function of $C^2$. The dotted vertical line corresponds to the 
choice $C^2 = 1/4$ suggested by Theorem~\ref{thm:subgauss}.}\label{fig:exp}
\end{figure}
This section concludes by illustrating our theoretical results through some experiments, highlighting the limitations of Boltzmann 
exploration and contrasting it with the performance of Boltzmann--Gumbel exploration. We consider a stochastic multi-armed bandit 
problem 
with $K=10$ arms each yielding Bernoulli rewards with mean $\mu_i = 1/2$ for all suboptimal arms $i>1$ and $\mu_1 = 1/2 + \Delta$ for the 
optimal arm. We set the horizon to $T = 10^6$ and the gap parameter to $\Delta = 0.01$. We compare three variants of Boltzmann 
exploration with inverse learning rate parameters
\begin{itemize}
 \item $\beta_t = C^2$ (\texttt{BE-const}),
 \item $\beta_t = C^2/\log t$ (\texttt{BE-log}), and
 \item $\beta_t = C^2/\sqrt{t}$ (\texttt{BE-sqrt})
\end{itemize}
for all $t$, and compare it with Boltzmann--Gumbel exploration (\texttt{BGE}), and UCB with exploration bonus $\sqrt{C^2 \log(t)/N_{t,i}}$. 

We study two different scenarios: (a) all rewards drawn i.i.d.~from the Bernoulli distributions with the means given above and (b) the 
first $T_0 = 5,\!000$ rewards set to $0$ for arm~$1$. The latter scenario simulates the situation described in the proof of 
Theorem~\ref{th:lowerboundgabor}, and in particular exposes the weakness of Boltzmann exploration with increasing learning rate 
parameters. The results shown on Figure~\ref{fig:exp}~(a) and~(b) show that while some variants of Boltzmann exploration may perform 
reasonably well when initial rewards take typical values and the parameters are chosen luckily, all standard versions fail to 
identify the optimal arm when the initial draws are not representative of the true mean (which happens with a small constant probability). 
On the other hand, UCB and Boltzmann--Gumbel exploration continue to perform well even under this unlikely event, as predicted by their 
respective theoretical guarantees. Notably, Boltzmann--Gumbel exploration performs 
comparably to UCB in this example (even slightly outperforming its competitor here), and performs notably well for the recommended 
parameter setting of $C^2 = \sigma^2 = 1/4$ (noting that Bernoulli random variables are $1/4$-subgaussian).

\paragraph{Acknowledgements} G\'abor Lugosi was supported by the Spanish Ministry of Economy and Competitiveness, Grant MTM2015-67304-P 
and FEDER, EU. Gergely Neu was supported by the UPFellows Fellowship (Marie Curie COFUND program n${^\circ}$ 600387).

\bibliographystyle{abbrvnat}
\bibliography{ngbib,allbib,confs}
\appendix
\section{Technical proofs}

\subsection{The proof of Theorem~\ref{th:upperbound}}\label{app:upperbound}
For any round $t$ and action $i$,
\begin{equation}\label{e:start}
	\frac{e^{-\eta_t}}{K} \le \frac{e^{\eta_t \hmu_{t-1,i}}}{\sum_{j=1}^K e^{\eta_t \hmu_{t-1,j}}} \le e^{\eta_t 
\big(\hmu_{t-1,i}-\hmu_{t-1,1}\big)}~.
\end{equation}
Now, for any $i > 1$, we can write
\begin{align*}
	\II{I_t = i}
&=
	\II{I_t = i,\, \hmu_{t-1,i} - \hmu_{t-1,1} < - \frac{\Delta_i}{2}} 
+       
	\II{I_t = i,\,\hmu_{t-1,i} - \hmu_{t-1,1} \ge - \frac{\Delta_i}{2}}
\\ &\le
	\II{I_t = i,\, \hmu_{t-1,i} - \hmu_{t-1,1} < - \frac{\Delta_i}{2}} 
+ 
	\II{\hmu_{t-1,1} \le \mu_1 - \frac{\Delta_i}{4}}
+
	\II{\hmu_{t-1,i} \ge \mu_i + \frac{\Delta_i}{4}}~.
\end{align*}
We take expectation of the three terms above and sum over $t = \tau+1,\ldots,T$. Because of~(\ref{e:start}), the first term is simply 
bounded as
\begin{align*}
	\sum_{t=\tau+1}^T \PP{I_t = i,\, \hmu_{t-1,i} - \hmu_{t-1,1} < - \frac{\Delta_i}{2}} 
\le 
        \sum_{t=\tau+1}^T e^{-\eta_t\Delta_i/2}
\le
	\sum_{t=\tau+1}^T \frac{1}{t\Delta^2}
\le
	\frac{\log (T+1)}{\Delta^2}~. 
\end{align*}
We control the second and third term in the same way. For the second term we have that
$
\II{\hmu_{t-1,1} \le \mu_1 - \frac{\Delta_i}{4}} \le \II{N_{t-1,1} \leq t_1} +  \II{\hmu_{t-1,1} \le \mu_1 - \frac{\Delta_i}{4},\, 
N_{t-1,1} 
> t_1}
$
holds for any fixed $t$ and for any $t_1 \le t-1$. Hence
\begin{align*}
	\sum_{t=\tau+1}^T \!\!\PP{\hmu_{t-1,1} \le \mu_1 - \frac{\Delta_i}{4}} 
\le\!\!
	\sum_{t=\tau+1}^T \!\!\PP{N_{t-1,1} \leq t_1}
+\!\!
	\sum_{t=\tau+1}^T \!\!\PP{\hmu_{t-1,1} \le \mu_1 - \frac{\Delta_i}{4},\, N_{t-1,1} > t_1}~.
\end{align*}
Now observe that, because of (\ref{e:start}) applied to the initial $\tau$ rounds, 
$\EE{N_{t-1,1}} \ge \frac{\tau}{eK}$ holds for all $t > \tau$. By setting $t_1 = \frac{1}{2}\EE{N_{t-1,1}} \ge \frac{\tau}{2eK}$, Chernoff 
bounds (in multiplicative form) give
$
\PP{N_{t-1,1} \leq t_1} \leq e^{-\frac{\tau}{8eK}}
$.
Standard Chernoff bounds, instead, give
\begin{align*}
	\PP{\hmu_{t-1,1} \le \mu_1 - \frac{\Delta_i}{4},\, N_{t-1,1} > t_1} 
\le
	\sum_{s=t_1+1}^{t-1} e^{-\frac{s\Delta^2}{8}}
\le
	\frac{8}{\Delta^2}e^{-\frac{t_1\Delta^2}{8}}
\le
	\frac{8}{\Delta^2}e^{-\frac{\tau\Delta^2}{16eK}}~.
\end{align*}
Therefore, for the second term we can write
\begin{align*}
	\sum_{t=\tau+1}^T \PP{\hmu_{t-1,1} \le \mu_1 - \frac{\Delta_i}{4}} 
\le
	T\,\left(e^{-\frac{\tau}{8eK}} + \frac{8}{\Delta^2}e^{-\frac{\tau\Delta^2}{16eK}}\right)
\le
	1 + \frac{8}{\Delta^2}~.
\end{align*}
The third term can be bounded exactly in the same way.
Putting together, we have thus obtained, for all actions $i > 1$,
\begin{align*}
	\sum_{i > 1} \EE{N_{T,i}} 
\le
	\tau + K + \frac{8K}{\Delta^2}
\le
	\frac{16eK(\log T)}{\Delta^2} + \frac{9K}{\Delta^2}~.
\end{align*}
This concludes the proof.
\qed

\subsection{The proof of Lemma~\ref{lem:SG_conc}}\label{app:SG_conc}
 Let $\tau_k$ denote the index of the round when arm $i$ is drawn for the $k$'th time. We let $\tau_0 = 0$ and $\tau_k = T$ for 
$k>N_{T,i}$. Then,
\begin{align*}
 \sum_{t=1}^T \PP{I_t = i, \overline{\Ehti}} &\le \EE{\sum_{k=0}^{T-1} \sum_{t=\tau_k+1}^{\tau_{k+1}} \II{I_t = i} \II{\overline{\Ehti}}}
 \\
 &= \EE{\sum_{k=0}^{T-1} \II{\overline{E^{\hmu}_{\tau_k,i}}} \sum_{t=\tau_k+1}^{\tau_{k+1}} \II{I_t = i}}
 \\
 &= \EE{\sum_{k=0}^{T-1} \II{\overline{E^{\hmu}_{\tau_k,i}}}}
 \\
 &\le 1 +  \sum_{k=1}^{T-1} \PP{\hmu_{\tau_k,i} \ge x_i}
 \\
 &\le 1 +  \sum_{k=1}^{T-1} \PP{\hmu_{\tau_k,i} - \mu_i \ge \frac{\Delta_i}{3}}.
\end{align*}
Now, using the fact that $N_{\tau_k,i} = k$, we bound the last term by exploiting the subgaussianity of the rewards through Markov's 
inequality:
\begin{align*}
 \PP{\hmu_{\tau_k,i} - \mu_i \ge \frac{\Delta_i}{3}} &= \PP{e^{\alpha \pa{\hmu_{\tau_k,i} - \mu_i}} \ge e^{\alpha 
\frac{\Delta_i}{3}}} \qquad\mbox{(for any $\alpha > 0$)}
 \\
 &\le \EE{e^{\alpha \pa{\hmu_{\tau_k,i} - \mu_i}}} \cdot e^{-\alpha \frac{\Delta_i}{3}} \qquad\mbox{(Markov's inequality)}
 \\
 &\le e^{\alpha^2\sigma^2/2k}\cdot e^{-\alpha \frac{\Delta_i}{3}} \;\;\;\;\;\qquad\qquad\mbox{(the subgaussian property)}
 \\
 &\le e^{\sigma^2/2C^2}\cdot e^{-\frac{\Delta_i\sqrt{k}}{3C}} \;\;\;\;\;\qquad\qquad\mbox{(choosing $\alpha = \sqrt{k/C^2}$)}
\end{align*}
Now, using the fact\footnote{This can be easily seen by bounding the sum with an integral.} that
$ \sum_{k=0}^\infty e^{c\sqrt{k}} \le \frac{2}{c^2}$
holds for all $c>0$, the proof is concluded. \qed

\subsection{The proof of Lemma~\ref{lem:invp}}
The proof of this lemma crucially builds on Lemma~1 of \citet{AG13}, which we state and prove below.
\begin{lemma}[cf.~Lemma~1 of \citet{AG13}]\label{lem:ivs1}
\begin{align*}
 \PPcc{I_t = i, E^{\hmu}_{t,i}, E^{\tmu}_{t,i}}{\F_{t-1}} \le  \frac{1-q_{t,i}}{q_{t,i}}\cdot \PPcc{I_t = 1, E^{\hmu}_{t,i}, 
E^{\tmu}_{t,i}}{\F_{t-1}}
\end{align*}
\end{lemma}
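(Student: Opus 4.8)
The plan is to reproduce the conditioning argument behind Lemma~1 of \citet{AG13}, adapted to the Gumbel perturbations. First I would note that $\Ehti = \ev{\hmu_{t,i}\le x_i}$ is $\F_{t-1}$-measurable, since given $\F_{t-1}$ every empirical mean $\hmu_{t,j}$ is determined and the only remaining randomness is in the independent Gumbel perturbations $Z_{t,j}$. Hence $\II{\Ehti}$ factors out of both conditional probabilities, and it suffices to establish $\PPcc{I_t = i, \Etti}{\F_{t-1}} \le \frac{1-q_{t,i}}{q_{t,i}} \PPcc{I_t = 1, \Etti}{\F_{t-1}}$ on the event that $\Ehti$ holds (the inequality being trivial otherwise, as both sides vanish).

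The core idea is to condition further on the sigma-algebra $\mathcal{G}$ generated by $\F_{t-1}$ together with all perturbations $\ev{Z_{t,j} : j \neq 1}$, so that $Z_{t,1}$ is the only remaining source of randomness. Writing $W = \max_{j\neq 1} \tmu_{t,j}$, which is $\mathcal{G}$-measurable, I would observe that on $\ev{I_t = i}\cap\Etti$ arm $i$ attains the maximum among all arms $j\neq 1$ while satisfying $\tmu_{t,i}\le y_i$; consequently $W = \tmu_{t,i}\le y_i$ there. Setting $p = \PPcc{\tmu_{t,1} > W}{\mathcal{G}}$ and discarding the zero-probability ties of the continuous Gumbel law, I get $\PPcc{I_t = 1}{\mathcal{G}} = p$ and, on the $\mathcal{G}$-measurable event that arm $i$ is the non-optimal maximizer with $W\le y_i$, also $\PPcc{I_t = i, \Etti}{\mathcal{G}} = 1-p$; in every other configuration the left-hand contribution is zero and the bound holds trivially.

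It then remains to verify $1 - p \le \frac{1-q_{t,i}}{q_{t,i}}\, p$, which after clearing denominators reduces to the clean inequality $q_{t,i}\le p$. This is exactly where the threshold $y_i$ pays off: since $W\le y_i$ on the relevant event, $\ev{\tmu_{t,1} > y_i}\subseteq\ev{\tmu_{t,1} > W}$, and because $Z_{t,1}$ is independent of $\mathcal{G}$ we have $p\ge \PPcc{\tmu_{t,1} > y_i}{\mathcal{G}} = \PPcc{\tmu_{t,1} > y_i}{\F_{t-1}} = q_{t,i}$. Taking the expectation of the pointwise bound over $\mathcal{G}$ (with $q_{t,i}$ constant given $\F_{t-1}$) and reinstating $\II{\Ehti}$ then yields the statement. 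I expect the main obstacle to be the careful bookkeeping of the argmax event under the extra conditioning --- in particular the step arguing that $\ev{I_t=i}\cap\Etti$ forces $W\le y_i$, which is what lets the comparison threshold for arm $1$ be replaced by $y_i$ --- rather than any genuine analytic difficulty, since the concluding reduction to $q_{t,i}\le p$ is elementary.
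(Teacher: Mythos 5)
Your proof is correct and rests on the same two pillars as the paper's: the observation that $\ev{I_t=i}\cap\Etti$ forces the perturbed index of every competitor below $y_i$ (so that arm $1$ only needs to clear the bar $y_i$, an event of conditional probability exactly $q_{t,i}$), and the independence of $Z_{t,1}$ from the remaining perturbations given $\F_{t-1}$. The only difference is bookkeeping: you condition explicitly on $\sigma\bigl(\F_{t-1},\ev{Z_{t,j}:j\neq 1}\bigr)$ and reduce the claim to the pointwise scalar inequality $q_{t,i}\le p$, whereas the paper stays conditioned on $\F_{t-1}$ and $\Etti$ and instead factorizes both sides through the common term $\PPc{\forall j>1:\,\tmu_{t,j}\le y_i}{\F_{t-1},\Etti}$ --- the two arguments are equivalent up to where the tower property is applied.
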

\begin{proof}
First, note that $E^{\hmu}_{t,i}\subseteq \F_{t-1}$. We only have to care about the case when $E^{\tmu}_{t,i}$ holds, otherwise both sides 
of the inequality are zero and the statement trivially holds. Thus, we only have to 
prove
\[
 \PPc{I_t = i}{\F_{t-1},E^{\tmu}_{t,i}} \le  \frac{1-q_{t,i}}{q_{t,i}}\cdot \PPc{I_t = 1}{\F_{t-1}, E^{\tmu}_{t,i}}.
\]
Now observe that $I_t = i$ under the event $E^{\tmu}_{t,i}$ implies $\tmu_{t,j} \le y_{i}$ for all $j$ (which follows from 
$\tmu_{t,j}\le\tmu_{t,i}\le y_i$). Thus, for any $i>1$, we have
\begin{align*}
\PPc{I_t = i}{\F_{t-1},E^{\tmu}_{t,i}} \le& \PPc{\forall j: \tmu_{t,j} \le y_{i}}{\F_{t-1},E^{\tmu}_{t,i}}
\\
=& \PPc{\tmu_{t,1} \le y_{i}}{\F_{t-1},E^{\tmu}_{t,i}}\cdot\PPc{\forall j>1: \tmu_{t,j} \le y_{i}}{\F_{t-1},E^{\tmu}_{t,i}}
\\
=& (1-q_{t,i})\cdot\PPc{\forall j>1: \tmu_{t,j} \le y_{i}}{\F_{t-1},E^{\tmu}_{t,i}},
\end{align*}
where the last equality holds because the event in question is independent of $E^{\tmu}_{t,i}$. Similarly,
\begin{align*}
\PPc{I_t = 1}{\F_{t-1},E^{\tmu}_{t,i}} \ge& \PPc{\forall j>1: \tmu_{t,1} > y_{i} \ge \tmu_{t,j} }{\F_{t-1},E^{\tmu}_{t,i}}
\\
=& \PPc{\tmu_{t,1} > y_{i}}{\F_{t-1},E^{\tmu}_{t,i}}\cdot\PPc{\forall j>1: \tmu_{t,j} \le y_{i}}{\F_{t-1},E^{\tmu}_{t,i}}
\\
=& q_{t,i}\cdot\PPc{\forall j>1: \tmu_{t,j} \le y_{i}}{\F_{t-1},E^{\tmu}_{t,i}}.
\end{align*}
Combining the above two inequalities and multiplying both sides with $\PPcc{E^{\tmu}_{t,i}}{\F_{t-1}}$ gives the result.
\end{proof}
We are now ready to prove Lemma~\ref{lem:invp}.
\begin{proof}[Proof of Lemma~\ref{lem:invp}]
Following straightforward calculations and using Lemma~\ref{lem:ivs1},
\begin{align*}
 \sum_{t=1}^T \PP{I_t = i, \Etti, \Ehti} &\le 
 \sum_{k=0}^{T-1} \EE{\frac{1-q_{\tau_k,i}}{q_{\tau_k,i}}}.
\end{align*}
Thus, it remains to bound the summands on the right-hand side.
To achieve this, we start with rewriting $q_{\tau_k,i}$ as
\begin{align*}
 q_{\tau_k,i} &= \PPcc{\tmu_{\tau_k,1} > y_{i}}{\F_{\tau_k-1}} = \PPcc{Z_{\tau_k,1} > \frac{\mu_1 - \hmu_{\tau_k,1} - 
\frac{\Delta_i}{3}}{\beta_{\tau_k,1}}}{\F_{\tau_k-1}}
 \\
 &= 1-\exp\pa{-\exp\pa{- \frac{\mu_1 - \hmu_{\tau_k,1} - \frac{\Delta_i}{3}}{\beta_{\tau_k,1}} +\gamma}},
\end{align*}
so that we have
\begin{align*}
 \frac{1-q_{\tau_k,i}}{q_{\tau_k,i}} &= \frac{\exp\pa{-\exp\pa{- \frac{\mu_1 - \hmu_{\tau_k,1} - \frac{\Delta_i}{3}}{\beta_{\tau_k,1}} 
+\gamma }}}{1-\exp\pa{-\exp\pa{- 
\frac{\mu_1 - \hmu_{\tau_k,1} - \frac{\Delta_i}{3}}{\beta_{\tau_k,1}} +\gamma}}}
\\
&\le \exp\pa{\frac{\mu_1 - \hmu_{\tau_k,1} - \frac{\Delta_i}{3}}{\beta_{\tau_k,1}} - \gamma} = \exp\pa{\frac{\mu_1 - 
\hmu_{t,1}}{\beta_{\tau_k,1}}} 
\cdot 
e^{- \gamma -\frac{\Delta_i}{3\beta_{\tau_k,1}}},
\end{align*}
where we used the elementary inequality $\frac{e^{-1/x}}{1-e^{-1/x}}\le x$ that holds for all $x\ge 0$. Taking expectations on both sides 
and using the definition of $\beta_{t,i}$ concludes the proof.
\end{proof}

\subsection{Proof of Lemma~\ref{lem:pert}}
Setting $L = \frac{9C^2\log^2 \pa{T \Delta_i^2 /c^2}}{\Delta_i^2}$, we begin with the bound
\[
 \sum_{t=1}^T \II{I_t = i, \overline{\Etti}, \Ehti} 
 \le L + \sum_{t=L}^T \II{\tmu_{t,i} > \mu_1 - \frac{\Delta_i}{3}, \hmu_{t,i} < \mu_i + \frac{\Delta_i}{3}, N_{t,i}>L}.
\]
For bounding the expectation of the second term above, observe that
\begin{align*}
 &\PPcc{\tmu_{t,i} > \mu_1 - \frac{\Delta_i}{3}, \hmu_{t,i} < \mu_i + \frac{\Delta_i}{3}, N_{t,i}>L}{\F_{t-1}}
 \le \PPcc{\tmu_{t,i}  > \hmu_{t,i} + \frac{\Delta_i}{3}, N_{t,i}>L}{\F_{t-1}}
 \\
 &\qquad\qquad\le \PPcc{\beta_{t,i} Z_{t,i} > \frac{\Delta_i}{3}, N_{t,i}>L}{\F_{t-1}}
 = \PPcc{Z_{t,i} > \frac{\Delta_i}{3\beta_{t,i}}, N_{t,i}>L}{\F_{t-1}}.
\end{align*}
By the distribution of the perturbations $Z_{t,i}$, we have
\begin{align*}
 \PPcc{Z_{t,i} \ge \frac{\Delta_i}{3\beta_{t,i} }}{\F_{t-1}} &= 1 - \exp\pa{-\exp\pa{-\frac{\Delta_i}{3\beta_{t,i}} + \gamma}}
 \\
 &\le \exp\pa{-\frac{\Delta_i}{3\beta_{t,i}} + \gamma} = \exp\pa{-\frac{\Delta_i\sqrt{N_{t,i}}}{3C} + \gamma},
\end{align*}
where we used the inequality $1 - e^{-x} \le x$ that holds for all $x$ and the definition of $\beta_{t,i}$. Noticing that $N_{t,i}$ is 
measurable in $\F_{t-1}$, we obtain the bound
\begin{align*}
 &\PPcc{Z_{t,i} > \frac{\Delta_i}{3\beta_{t,i}}, N_{t,i}>L}{\F_{t-1}} \le 
 \exp\pa{-\frac{\Delta_i\sqrt{N_{t,i}}}{3C} + \gamma}
\cdot \II{N_{t,i}>L},
\\
&\qquad\qquad \le \exp\pa{-\frac{\Delta_i\sqrt{L}}{3C} + \gamma} \cdot \II{N_{t,i}>L} \le \frac{c^2 e^\gamma}{T\Delta_i^2},
\end{align*}
where the last step follows from using the definition of $L$ and bounding the indicator by $1$. Summing up for all $t$ and taking 
expectations concludes the proof.
\qed

\subsection{The proof of Corollary~\ref{cor:subgauss}}\label{sec:cor}
Following the arguments in Section~\ref{sec:subgauss_proof}, we can show that the number of suboptimal draws can be bounded as
\[
 \EE{N_{T,i}} \le 1 + \sigma^2\frac{A + B \log^2 (T\Delta_i^2/\sigma^2)}{\Delta_i^2}
\]
for all arms $i$, with constants $A = e^\gamma+ 18 \sqrt{e} \pa{1 + e^{-\gamma}}$ and $B = 9$.
 We can obtain a distribution-independent bound by setting a threshold $\Delta>0$ and writing the regret as
\begin{align*}
 R_T &\le \sigma^2\sum_{i:\Delta_i>\Delta} \frac{A + B\log^2 (T\Delta_i^2/\sigma^2)}{\Delta_i} + \Delta T
 \\
 &\le \sigma^2 K \frac{A+ B\log^2 (T\Delta^2/\sigma^2)}{\Delta} + \Delta T \qquad\mbox{(since $\log^2(x^2)/x$ is monotone decreasing for 
$x\le 1$)}
 \\
 &\le \sigma\sqrt{TK} \frac{A+B \log^2 (K \log^2 K)}{\log K} + \sigma\sqrt{TK}\log K \qquad\mbox{(setting $\Delta = \sigma\sqrt{K/T}\log 
K$)}
 \\
 &\le \sigma\sqrt{TK} \frac{A+2 B \log^2 (K)}{\log K} + \sigma\sqrt{TK}\log K \;\;\qquad\qquad\mbox{(using $2\log\log(x) \le \log(x)$)}
 \\
 &\le \sigma\sqrt{TK} \log K \pa{2B + A/\log K}  + \sigma\sqrt{TK}\log K
 \\
 &\le \pa{2A+2B+1} \sigma\sqrt{TK} \log K,
\end{align*}
where we used $\log K \ge \frac 12$ that holds for $K\ge 2$. The proof is concluded by noting that $2A + 2B + 1 \approx 187.63 < 200$. \qed

\subsection{The proof of  Theorem~\ref{thm:wc}}\label{sec:wc}
The simple counterexample for the proof follows the construction of Section~3 of \citet{AG13}. Consider a problem with 
deterministic rewards for each arm: the optimal arm~1 always gives a reward of $\Delta = \sqrt{\frac{K}{T}}C_1$ 
and all the other arms give rewards of 0. Define $B_{t-1}$ as the event that $\sum_{i=2}^K N_{t,i} \le \frac{C_2\sqrt{KT}}{\Delta}$. Let 
us study two cases depending on the probability $\PP{A_{t-1}}$: If $\PP{A_{t-1}} \le \frac 12$, we have
\begin{align}\label{eq:notA}
 R_T \ge R_t &\ge \EEcc{\sum_i N_{t,i} \Delta}{\overline{A_{t-1}}}\cdot \frac 12.
 \ge \frac 12 C_2 \sqrt{KT}.
\end{align}

In what follows, we will study the other case when $\PP{A_{t-1}} \ge \frac 12$. We will show that, under this assumption, a suboptimal arm 
will be drawn in round $t$ with at least constant probability. In particular, we have
\begin{align*}
 \PP{I_t \neq 1} &= \PP{\exists i>1:\,\tmu_{t,1} < \tmu_{t,i}}
 \\
 &\ge \PP{\tmu_{t,1} < \mu_1,\, \exists i>1:\, \mu_1 < \tmu_{t,i}}
 \\
 &\ge \PPcc{\tmu_{t,1} < \mu_1,\, \exists i>1:\, \mu_1 < \tmu_{t,i}}{A_{t-1}} \PP{A_{t-1}}
 \\
 &\ge \EE{\PPcc{\tmu_{t,1} < \mu_1,\, \exists i>1:\, \mu_1 < \tmu_{t,i}}{\F_{t-1}, A_{t-1}}} \frac 12
 \\
 &= \EE{\PPcc{\tmu_{t,1} < \mu_1}{\F_{t-1},A_{t-1}}\cdot \PPcc{\exists i>1:\, \mu_1 < \tmu_{t,i}}{\F_{t-1}, A_{t-1}}} \frac 12
 \\
 &= \EE{\PPcc{Z_{t,1} < 0}{\F_{t-1},A_{t-1}} \cdot \PPcc{\exists i>1:\, \Delta < \beta_{t,i} Z_{t,i}}{\F_{t-1}, A_{t-1}}} \frac 12.
\end{align*}
To proceed, observe that $\PP{Z_{t,1} < 0} \ge 0.1$ and
\begin{align*}
&\PPcc{\exists i>1:\, \Delta < \beta_{t,i} Z_{t,i}}{\F_{t-1}, A_{t-1}} =
\PPcc{\exists i>1:\, \Delta \sqrt{N_{t,i}} < Z_{t,i}}{\F_{t-1}, A_{t-1}}
\\
&\qquad\qquad = 1 - \prod_{i>1} \exp\pa{-\exp\pa{- \Delta \sqrt{N_{t,i}} + \gamma}}
\\
&\qquad\qquad = 1 - \exp\pa{-\sum_{i>1} \exp\pa{- \Delta \sqrt{N_{t,i}} + \gamma}}
\\
&\qquad\qquad = 1 - \exp\pa{-\sum_{i>1} \frac{K-1}{K-1}\exp\pa{- \Delta \sqrt{N_{t,i}} + \gamma}}
\\
&\qquad\qquad \ge 1 - \exp\pa{-\pa{K-1}\exp\pa{- \Delta \sqrt{\sum_{i>1} \frac{N_{t,i}}{K-1}} + \gamma}} \qquad\mbox{(by Jensen's 
inequality)}
\\
&\qquad\qquad \ge 1 - \exp\pa{-\pa{K-1}\exp\pa{- \Delta \sqrt{\frac{C_2 \sqrt{KT}}{\Delta\pa{K-1}}} + \gamma}}
\\
&\qquad\qquad = 1 - \exp\pa{-\pa{K-1}\exp\pa{- \Delta \sqrt{\frac{C_2 T}{C_1 \pa{K-1}}} + \gamma}}
\\
&\qquad\qquad \ge 1 - \exp\pa{-\exp\pa{- C_1 \sqrt{\frac{C_2}{C_1}} + \log (K - 1) + \gamma}}.
\end{align*}
Setting $C_2 = C_1 = \log K$, we obtain that whenever $\PP{A_{t-1}} \ge \frac 12$, we have
\begin{align*}
 \PP{I_t \neq 1} &\ge 1 - \exp\pa{-\exp\pa{- \log K + \log (K - 1) + \gamma}}
 \\
 &\ge 1 - \exp\pa{-\exp\pa{\gamma}} \ge 0.83 > \frac 12.
\end{align*}
This implies that the regret of our algorithm is at least
\[
 \frac 12 T\Delta = \frac 12 \sqrt{TK}\log K.
\]
Together with the bound of Equation~\eqref{eq:notA} for the complementary case, this concludes the proof. \qed

\end{document}